\newtheorem{lemma}{Lemma}
\newcommand{\model}{{CorDGT}}
\newcommand{\modell}{{CorDGT\ }}
\begin{document}

\title{Dynamic Graph Transformer with Correlated Spatial-Temporal Positional Encoding}

\author{Zhe Wang}
\affiliation{%
    \institution{Zhejiang University}
    \department{College of Computer Science}
    \city{Hangzhou}
    \country{China}
}
\email{zhewangcs@zju.edu.cn}

\author{Sheng Zhou}
\authornote{Corresponding author.}
\affiliation{%
    \institution{Zhejiang University}
    \department{School of Software Technology}
    \city{Ningbo}
    \country{China}
}
\email{zhousheng\_zju@zju.edu.cn}

\author{Jiawei Chen}
\affiliation{
    \institution{Zhejiang University}
    \department{The State Key Laboratory of Blockchain and Data Security}
    \city{Hangzhou}
    \country{China}
}
\email{sleepyhunt@zju.edu.cn}

\author{Zhen Zhang}
\affiliation{%
    \institution{National University of Singapore}
    \city{Singapore}
    \country{Singapore}
}
\email{zhen@nus.edu.sg}

\author{Binbin Hu}
\affiliation{%
    \institution{Ant Group}
    \city{Hangzhou}
    \country{China}
}
\email{bin.hbb@antfin.com}

\author{Yan Feng}
\affiliation{%
    \institution{Zhejiang University}
    \department{The State Key Laboratory of Blockchain and Data Security}
    \city{Hangzhou}
    \country{China}
}
\email{fengyan@zju.edu.cn}

\author{Chun Chen}
\affiliation{%
    \institution{Zhejiang University}
    \department{The State Key Laboratory of Blockchain and Data Security}
    \city{Hangzhou}
    \country{China}
}
\email{chenc@zju.edu.cn}

\author{Can Wang}
\affiliation{%
    \institution{Zhejiang University}
    \department{The State Key Laboratory of Blockchain and Data Security}
    \city{Hangzhou}
    \country{China}
}
\email{wcan@zju.edu.cn}

\renewcommand{\shortauthors}{Zhe Wang et al.}

\begin{abstract}
Learning effective representations for Continuous-Time Dynamic Graphs (CTDGs) has garnered significant research interest, largely due to its powerful capabilities in modeling complex interactions between nodes.
A fundamental and crucial requirement for representation learning in CTDGs is the appropriate estimation and preservation of proximity. 
However, due to the sparse and evolving characteristics of CTDGs, the spatial-temporal properties inherent in high-order proximity remain largely unexplored.
Despite its importance, this property presents significant challenges due to the computationally intensive nature of personalized interaction intensity estimation and the dynamic attributes of CTDGs.
To this end, we propose a novel Correlated Spatial-Temporal Positional encoding that incorporates a parameter-free personalized interaction intensity estimation under the weak assumption of the Poisson Point Process.
Building on this, we introduce the \textbf{D}ynamic \textbf{G}raph \textbf{T}ransformer with \textbf{Cor}related Spatial-Temporal Positional Encoding (\textbf{CorDGT}), which efficiently retains the evolving spatial-temporal high-order proximity for effective node representation learning in CTDGs. Extensive experiments on seven small and two large-scale datasets demonstrate the superior performance and scalability of the proposed \model. The code is available at: \url{https://github.com/wangz3066/CorDGT}.

\end{abstract}

\begin{CCSXML}
<ccs2012>

<concept_id>10003752.10003809.10003635.10010038</concept_id>
       <concept_desc>Theory of computation~Dynamic graph algorithms</concept_desc>
       <concept_significance>500</concept_significance>
       </concept>
       
 </ccs2012>
\end{CCSXML}

\ccsdesc[500]{Theory of computation~Dynamic graph algorithms}

\keywords{Continuous-Time Dynamic Graph, Graph Representation Learning, Transformer, Proximity}



\maketitle

\section{Introduction}
\label{sec::intro}

Graph Neural Networks (GNNs) \cite{velickovic2018graph, hamilton2017inductive, xu2018how, dong2021equivalence, wang2024online} have become a potent tool for analyzing diverse graph structures due to their effectiveness in learning low-dimensional graph representations. While early GNN research focused on static graphs, many real-world network data exhibit evolving graph structures, such as the World Wide Web and recommendation systems \cite{chen2023bias, wang2024distributionally, wu2022graph}. 
The increasing prevalence of dynamic graph data has spurred researchers to adapt GNNs to handle dynamic graphs.
Dynamic graphs can be categorized into Discrete-Time Dynamic Graphs (DTDGs) and Continuous-Time Dynamic Graphs (CTDGs). 
Recently, CTDGs have garnered more research attention compared to DTDGs, attributed to their flexible generalization capabilities and proficiency in modeling intricate and dynamic node interactions.
To learn effective representations for CTDGs, researchers have explored various techniques, including temporal message passing \cite{xuiclr2020tgat,rossi2020temporal, kumar2019predicting, trivedi2019dyrep, wang2024towards} and the incorporation of temporal intervals into random walks \cite{nguyen2018continuous, wangiclr2021caw, jin2022neural}, etc. 

The effectiveness of graph representation fundamentally hinges on the preservation of proximity between nodes. 
This implies that nodes that are proximate in the graph should also maintain closeness in the low-dimensional space.
The first-order proximity, represented by the direct interactions between node pairs observed in the CTDG, can be easily maintained by making the embeddings of adjacent nodes close. However, considering the sparsity and the evolving nature of CTDGs, the preservation of first-order proximity alone does not provide a comprehensive measure of proximity.
The high-order proximity \cite{tang2015line} in CTDGs, evaluated by the closeness degree between target nodes and auxiliary nodes, embodies the \textbf{spatial-temporal duality}. 
Figure \ref{fig::proximity} illustrates this spatial-temporal duality of high-order proximity in CTDGs using a social network example. Suppose the model is predicting the interaction between users $u$ and $v$ at time $t=11$. As there is not direct interaction between the target nodes, their shared neighbors should be taken into consideration. Unlike the static graphs, although users $u$ and $v$ have three shared neighbors in both subgraphs (a) and (b), they have more frequent and recent interactions with their shared neighbors in subgraph (a) than (b), indicating higher probability of connections between node pairs $u$ and $v$. 
Although important, most existing methods \cite{trivedi2019dyrep, xuiclr2020tgat, rossi2020temporal} independently aggregate neighbor information of target nodes without considering dependencies between target nodes.
Some existing works \cite{wangiclr2021caw, jin2022neural} encode the appearance positions of auxiliary nodes on the random walks path, which fail to address the spatial-temporal duality property inherent in high-order proximity.

Despite its paramount importance, estimating and preserving comprehensive spatial-temporal proximity in CTDGs poses several challenges for the following reasons: \textit{Firstly}, CTDGs entail multiple interactions occurring between node pairs at varying timestamps. The \textbf{interaction intensity}, which represents the count of interactions within a specified time interval, is instrumental in characterizing the degree of connection between node pairs. Consequently, measuring higher-order proximity in CTDGs necessitates a personalized intensity estimation between the target node pair and the auxiliary nodes at any given timestamp, a process that is computationally demanding. Several existing works \cite{zuo2018embedding, wen2022trend} employ the Hawkes process and learn node representations to estimate pairwise intensity. Nonetheless, integrating these methods to estimate high-order proximity necessitates pre-training of the node embeddings or neural networks, thereby incurring significant computational costs. \textit{Secondly}, the dynamic nature of CTDGs leads to varying proximity between node pairs across different timestamps, influenced by both spatial and temporal factors. This necessitates efficient adaptation in proximity estimation, thereby introducing unique challenges in the modeling of CTDGs.

To address the aforementioned challenges, in this paper, we propose \textbf{\model}, a \textbf{D}ynamic \textbf{G}raph \textbf{T}ransformer with \textbf{Cor}related Spatial-Temporal Positional Encoding. Inspired by the effectiveness of Poisson process in modeling counting process, we propose a novel Temporal Distance that incorporates a parameter-free interaction intensity estimation by leveraging the weak assumption of the Poisson Point Process. This approach circumvents the computational cost of pretraining, thereby enhancing efficiency. Based on Temporal Distance, we further propose a \textit{Correlated Spatial-Temporal Positional Encoding (STPE-C)} that models the spatial-temporal duality of evolving high-order proximity in CTDGs. Equipped with the STPE-C, we propose a dynamic graph Transformer that adaptively preserves the comprehensive proximity for effective learning of node embeddings in CTDGs, which are subsequently leveraged for downstream tasks. Consistent performance improvement over eight baselines are observed on both link prediction and node classification tasks, demonstrating the superiority of the proposed \model. Additionally, experiments on large-scale datasets demonstrate the superior effectiveness-efficiency trade-off of proposed \model. In summary, the main contributions of this paper are as follows:
\begin{itemize}
\setlength{\leftmargin}{0pt}
\item We propose a novel estimation of comprehensive proximity that incorporates an efficient parameter-free personalized intensity to encode the evolving spatial-temporal high-order proximity on CTDGs.
\item We propose Dynamic Graph Transformer with Correlated Spatial-Temporal Positional Encoding (\model), which efficiently preserves the evolving comprehensive proximity for effective node representation learning in CTDGs.
\item Extensive experiments conducted on seven small and two large-scale datasets demonstrate the superiority and the scalability of the proposed \model. 
\end{itemize}

\begin{figure}[t]
    \centering
    \includegraphics[width=0.9\linewidth]{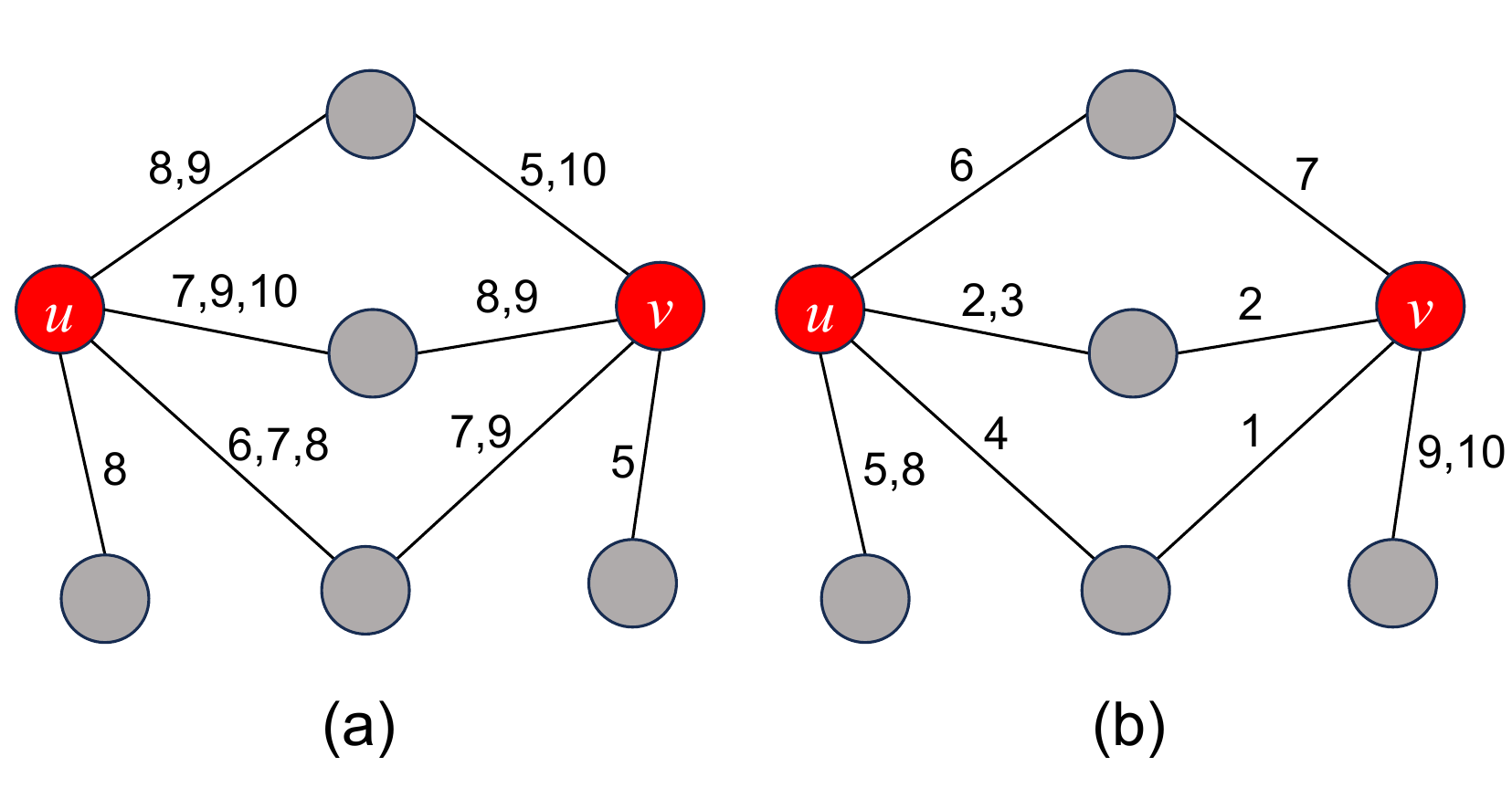}
    \caption{A social network example. The model is expected to predict the existence of the interaction between node $u$ and $v$ at $t=11$. }
    \label{fig::proximity}
\end{figure}

\section{Related Works}
\paragraph{\textbf{Dynamic Graph Neural Networks.}}
Existing dynamic graphs can be categorized into Discrete-Time Dynamic Graphs (DTDGs) and Continuous-Time Dynamic Graphs (CTDGs) based on whether the timestamps in the dynamic graphs are discrete or continuous. Early works on DTDGs learn the graph representation of each snapshot, which is then fed into a sequential model such as a Recurrent Neural Network or Transformer to learn temporal representation \cite{yu2018spatio,li2018diffusion, pareja2020evolvegcn,sankar2020dysat, you2022roland}. Some recent works learn the representation of successive snapshots selected by Bernoulli sampling \cite{zhang2023dyted} or a sliding window \cite{zhu2023wingnn}. In contrast, the graph proximity of CTDGs is highly coupled with the time series. Most CTDG models leverage a unified model to learn the node representation via temporal graph neural networks \cite{xuiclr2020tgat, rossi2020temporal, trivedi2019dyrep, kumar2019predicting, cong2023we, luo2022neighborhood}, random walks \cite{nguyen2018continuous, wangiclr2021caw, jin2022neural}, or Transformers \cite{wang2021apan, fan2021continuous, yu2023towards, cong2021dynamic}. However, due to the sparsity and evolving nature of CTDGs, addressing the spatial-temporal property of high-order proximity is important for proximity measurement, which is overlooked by existing works. Different from existing works, the proposed model addresses the spatial-temporal high-order proximity in CTDGs by incorporating the spatial-temporal distance between the target and the auxiliary nodes.

\paragraph{\textbf{Temporal Point Process.}} The Temporal Point Process (TPP) is a mathematical model used to represent a sequence of events in time, which has been employed to model the interaction intensity of Continuous-Time Dynamic Graphs (CTDGs) \cite{trivedi2019dyrep, chang2020continuous, wen2022trend}. These works have utilized a parametric network or temporal node embeddings to model the interaction intensity of TPPs, such as the Hawkes process \cite{hawkes1971spectra}. However, the pre-training process of these methods can be time-consuming, making them unsuitable for high-order proximity modeling. Instead, we propose a parameter-free intensity estimation method based on the Poisson Point Process.

\paragraph{\textbf{Graph Transformers.}}
Several studies have explored the application of the pure Transformer model in static graph representation learning, wherein graph-specific information is incorporated as a soft inductive bias via positional encodings such as eigenvectors of the graph Laplacian matrix \cite{dwivedi2020generalization, kreuzer2021rethinking}, diagonals of the random walk matrix \cite{dwivedi2021graph}. pairwise shortest path length \cite{ying2021graphormer, park2022grpe}. In the context of DTDGs, several works have proposed utilizing the Transformer model to capture the temporal evolution following spatial graph convolution within each graph snapshot \cite{xu2020spatial, yu2020spatio, sankar2020dysat, wang2022synchronous}. Given the Transformer model's capability to learn long-term dependencies, it has been adopted for learning on CTDGs \cite{fan2021continuous, yu2023towards}. APAN \cite{wang2021apan} employs the Transformer to model asynchronous mail messages from other temporal neighbors. 
Our proposed model differs from these methods in terms of input tokens, positional encodings, and Transformer architecture.

\section{Problem Definition}
\paragraph{\textbf{Problem Formulation.}}
The Continuous-Time Dynamic Graph is defined as a set of interaction events $\mathcal{E} = [(u_i,v_i,t_i,e_i)]_{i=1}^M$, where $M$ is the number of events, and the tuple $(u_i, v_i, t_i)$ represents that nodes $u_i$ and $v_i$ interact at time $t_i$, and $e_i \in \mathbb{R}^{d_e}$ is the feature vector associated with the $i$-th interaction event. The node feature matrix of the CTDG is denoted as $\mathcal{X}=[x(1),...,x(N)] \in \mathbb{R}^{N \times d_n}$ where $N$ is the total number of nodes and $x(i)$ is the raw node feature of the $i$-th node. Two nodes may have multiple interactions at different timestamps in a CTDG. The objective of representation learning on CTDG is to learn a embedding function for each node $i$ at time $t$: $f_i(t) \in \mathbb{R}^d$. Since most of the CTDG datasets do not have node labels, the CTDG models are typically trained based on the future link prediction task. Future link prediction on CTDGs aims to predict the occurrence probability of the link $(u,v,t_{pred})$ based on all the historical interactions happening before $t_{pred}$, which can be categorized into transductive and inductive settings based on whether the testing nodes are visible in the training stage. 
The network parameters trained based on transductive link prediction can be utilized for other downstream tasks such as node classification.

\paragraph{\textbf{Temporal Neighbor \cite{xuiclr2020tgat, rossi2020temporal}.}} Given a node $u$ at time $t$, the collection of its (1-hop) temporal neighbors is defined as the set of nodes that have interaction with $u$ before $t$: $\mathcal{\eta}^{(1)}(u;t)=\{(w,t')| \\ (u,w, t',\cdot) \in \mathcal{E}, t'<t \}$. The collection of $K$-hop ($K>1$) temporal neighbors of the node $u$ at time $t$ can be recursively defined as all the temporal neighbor of its $(K-1)$-hop temporal neighbors, denoted as $\mathcal{\eta}^{(K)}(u;t)$. The $K$-hop temporal neighborhood of a node $u$ at time $t$ is defined as $\mathcal{N}^{(K)}(u;t)= \cup_{i=1}^K \mathcal{\eta}^{(i)}(u;t)$.

\section{Model}
\label{section::model}

\subsection{Overall Framework}
In this section, we introduce the proposed \textbf{D}ynamic \textbf{G}raph \textbf{T}rans-former with \textbf{Cor}related Spatial-Temporal Positional Encoding \\ (\textbf{CorDGT}).
The general framework of \modell is presented in Figure \ref{fig::framework}. 
Suppose we are predicting the interaction probability of the target nodes $u$ and $v$ at time $t_{pred}$, \modell begins by sampling their \textit{contextual nodes set} from their $K$-hop temporal neighborhood, denoted as $\mathcal{C}(u,t_{pred})$ and $\mathcal{C}(v,t_{pred})$, respectively. As stated in Section \ref{sec::intro}, the spatial-temporal high-order proximity between the target nodes $u$ and $v$ is characterized by their first-order proximity to these contextual nodes $w \in \mathcal{C}(u,t_{pred}) \cup \mathcal{C}(v,t_{pred})$. Therefore, we first introduce Spatial Distance and Temporal Distance to characterize the first-order proximity on the CTDG. Then, the Correlated Spatial-Temporal Positional Encodings (STPE-C) for each contextual node $w \in \mathcal{C}(u,t_{pred}) \cup \mathcal{C}(v,t_{pred})$ with respect to the target node pairs $u$ and $v$ at time $t_{pred}$ is proposed to encode the spatial-temporal high-order proximity. Further, the network architecture of \modell is presented, which modifies the original Transformer to incorporate structural information of the CTDG.

\begin{figure*}
    \centering
    \includegraphics[width=0.88\linewidth]{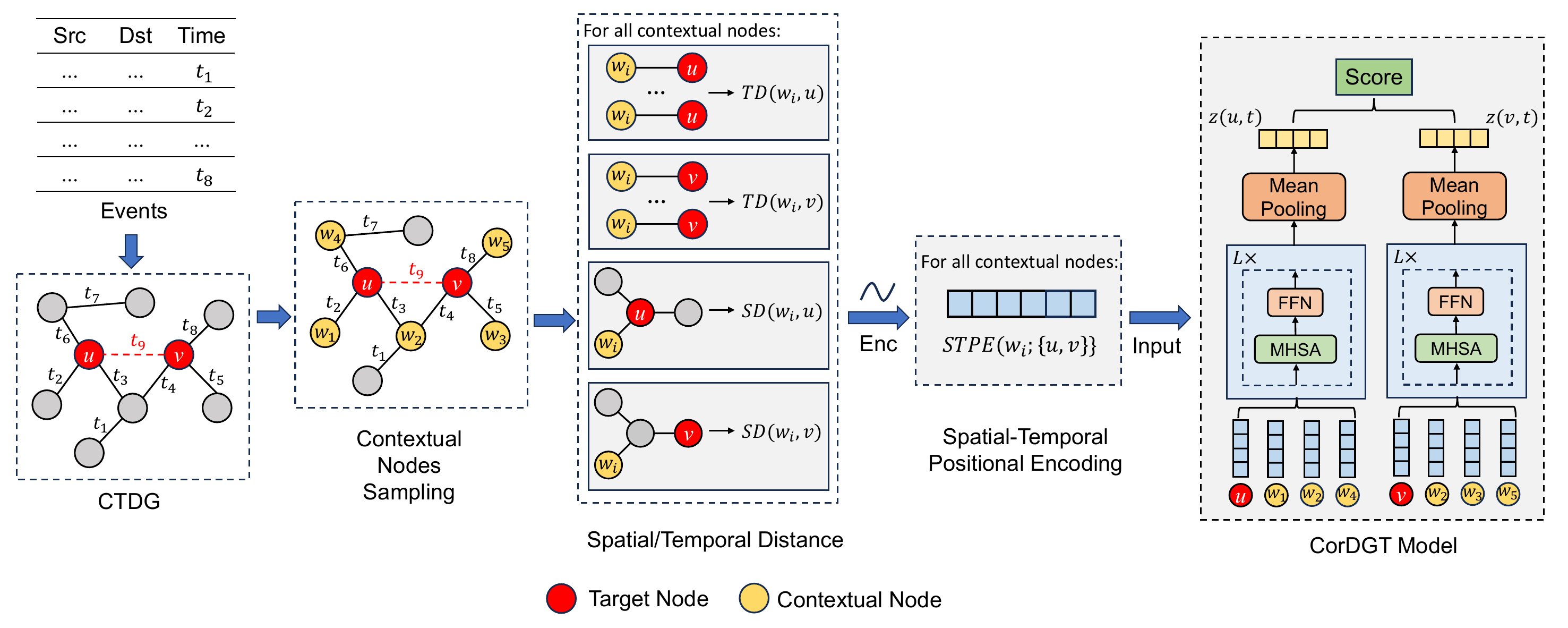}
    \caption{The framework of the proposed \model.}
    \label{fig::framework}
\end{figure*}

\subsection{Contextual Nodes Sampling}
\label{subsection::contexutal_nodes_sampling}
A simple tree-based sampling strategy is adopted to obtain contextual nodes set. Specifically, given the node $u$ at time $t_{pred}$, its contextual nodes set is initialized as $\mathcal{C}(u,t_{pred})=\{(u,t_{pred})\}$. At the first iteration, we uniformly sample $n_1$ temporal neighbors of $(u,t_{pred})$ and add them to $\mathcal{C}(u,t_{pred})$. At the $k$-th iteration ($k>1$), we uniformly sample $n_k$ temporal neighbors for each nodes sampled at the $(k-1)$ iteration and add them to $\mathcal{C}(u,t_{pred})$. Note that the sampling numbers $n_i$ ($i=1,...,K)$ are predefined hyper-parameters.

\subsection{Correlated Spatial-Temporal Positional Encoding}
\label{section::stpe-c}

\subsubsection{{Spatial Distance.}}
\label{subsection::spatial_distance}
The spatial distance can be characterized by the shortest path length on the topological structure of the CTDG. However, since the topological structure of the CTDG is continuously evolving as new interaction events happen, online updating the shortest path length between any two nodes on the entire CTDG is time consuming. Alternatively, we use the shortest path length on the sampled K-hop temporal neighborhood as a proxy. Specifically, to compute Spatial Distance (SD) of $w$ to $w_0$ at time $t_{pred}$, denoted as $\mathrm{SD}(w;w_0,t_{pred})$, we firstly sample K-hop temporal neighborhood from the root node $w_0$, then $\mathrm{SD}(w;w_0,t_{pred})$ is defined as:
\begin{equation}
\begin{split}
    \mathrm{SD}(w;w_0, t_{pred}) = \min \{ hop(w'; \mathcal{C}(w_0,t_{pred})|  \\ w=w', w' \in \mathcal{C}(w_0,t_{pred}) \}
\end{split}
\end{equation}
where $hop$ denotes the hop number of $w'$ from the root node $w_0$. Note that if If $w=w_0$, $ \mathrm{SD}(w;w_0,t_{pred})$ is set as 0. The computation of SD is based on the closest $w$ to the root node $w_0$, since $w$ may occur multiple times in the contextual node set of $w_0$. If $w$ is not in $\mathcal{C}(w_0, t_{pred})$, we set $\mathrm{SD}(w;w_0, t_{pred})$ as infinity. 

\subsubsection{Temporal Distance}
\label{subsection::temporal_distance}
In contrast to static graphs, the node pairs may have multiple interactions at different timestamps in the CTDG, thus the proximity between two nodes is associated with their interaction history. We propose Temporal Distance to characterize the proximity originated from the interaction history between two nodes. Suppose current timestamp is $t_{pred}$, and the timestamps sequence that nodes $w$ and $w_0$ interacted prior to $t_{pred}$ is denoted as $T(w,w_0,t_{pred}) = \{t_1, ..., t_n\}$ with $t_{i-1} < t_{i} (i=2,...,n)$ and $t_n < t_{pred}$. Then, the Temporal Distance (TD) between $w$ and $w_0$ at time $t_{pred}$, denoted as $\mathrm{TD}(w,w_0,t_{pred})$ is defined as:
\begin{equation}
\label{equation::TD1}
    \mathrm{TD}(w,w_0,t_{pred}) = f(\{t_1, ..., t_n\}, t_{pred})
\end{equation}
where $f$ is an arbitrary function which maps the interaction timestamps sequence to a scalar. Moreover, the temporal distance should satisfy following properties: (Recentness) If the most recent interaction between two nodes is closer to current time, then the temporal distance between them should be smaller; (Intensity) If the interaction intensity of these two nodes is higher, then the temporal distance between them should be smaller. The recentness property can be easily characterized by the difference between $t_n$ and $t_{pred}$. 
However, estimating the interaction intensity between two nodes is not straightforward since there is no prior knowledge to its distribution and the interactions patterns of different node pairs may be highly divergent. 
Most existing works adopt Hawkes process in modeling the interaction intensity \cite{zuo2018embedding, wen2022trend}. 
However, these methods require to pre-train the node embeddings to obtain the intensity, thus is time-consuming. 
Instead, we propose a parameter-free approach to estimate the interaction intensity at any time. 
Specifically, we employ the Poisson point process assumption for its simplicity and generalization ability, which is a commonly used weak assumption for an unknown counting process \cite{lavenberg1983computer, wangiclr2021caw}.
We provide following Theorem \ref{theorem::poisson} with the Poisson point process assumption to evaluate maximum likelihood estimation of interaction intensity given the interaction sequence.
\begin{lemma}
\label{theorem::poisson}
Suppose the interactions between $w$ and $w_0$ prior to $t_{pred}$, denoted as $T(w,w_0,t_{pred})= \{t_1, ..., t_n \}$ with $t_{i-1} < t_{i}(i=2,...,n)$ and $t_{n} < t_{pred}$, follow a Poisson point process with intensity $\lambda$, then the maximum likelihood estimation of $\lambda$ is $\dfrac{n}{t_n}$. 
\end{lemma}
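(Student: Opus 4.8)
The plan is to write down the likelihood of the observed arrival sequence under the homogeneous Poisson point process assumption and then maximize it over $\lambda$ by the standard log-likelihood-and-derivative argument. The key structural fact I would use is that a Poisson point process with constant intensity $\lambda$ has independent and identically distributed inter-arrival times, each exponentially distributed with rate $\lambda$; this characterization is the most convenient one here because it produces the normalizing factor $t_n$ through a telescoping sum.

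Concretely, fixing the observation origin at $t_0 = 0$ and setting the inter-arrival times $\tau_i = t_i - t_{i-1}$ for $i = 1, \ldots, n$, the joint density of the observed times factorizes as
$$L(\lambda) = \prod_{i=1}^n \lambda e^{-\lambda \tau_i} = \lambda^n \exp\!\left(-\lambda \sum_{i=1}^n \tau_i\right) = \lambda^n e^{-\lambda t_n},$$
where the last equality uses the telescoping identity $\sum_{i=1}^n (t_i - t_{i-1}) = t_n - t_0 = t_n$. Taking logarithms gives the log-likelihood $\ell(\lambda) = n \ln \lambda - \lambda t_n$, and solving $\ell'(\lambda) = n/\lambda - t_n = 0$ yields the unique critical point $\hat\lambda = n / t_n$. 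Since $\ell''(\lambda) = -n/\lambda^2 < 0$ on $(0,\infty)$, the log-likelihood is strictly concave, so this critical point is the global maximizer, which establishes the claim.

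The step that needs the most care — and the main conceptual obstacle rather than a computational one — is the choice of observation window, since this is what determines whether the denominator is $t_n$ or $t_{pred}$. The factor $t_n$ appears precisely because the interaction history is treated as observed on $[0, t_n]$ with the origin fixed at $0$; equivalently, the estimate arises from treating the $n$ inter-arrival times as a sample of size $n$ from an exponential law and using $\sum_i \tau_i = t_n$. Had one instead conditioned on observing the empty process over $(t_n, t_{pred}]$, the exponent would carry $t_{pred}$ and the estimate would become $n/t_{pred}$. I would therefore state the adopted convention explicitly at the outset of the proof so that the appearance of $t_n$ is unambiguous.
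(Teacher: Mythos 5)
Your proof is correct and follows essentially the same route as the paper's: both reduce to maximizing $n\ln\lambda - \lambda t_n$ via the telescoping sum $\sum_i(t_i-t_{i-1})=t_n$. The only cosmetic difference is that the paper writes the likelihood as the product of Poisson probabilities of one event per inter-arrival interval, which differs from your product of exponential inter-arrival densities by the $\lambda$-independent factor $\prod_i(t_i-t_{i-1})$ and hence yields the same maximizer; your added concavity check and the explicit remark on the observation window ($t_n$ versus $t_{pred}$) are sound refinements the paper omits.
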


\begin{proof}
With the Poisson Point Process assumption, the probability of the interaction sequence $[t_1,...,t_n]$ happens is:
\begin{equation}
    Pr(N(t_{i-1},t_i)=1, i=1,2,...,n) = \prod_{i=1}^n \dfrac{{\lambda(t_i-t_{i-1})}^1}{1!}e^{-\lambda(t_i-t_{i-1})}
\end{equation}
where $N(t_{i-1},t_i)$ denotes the number of interactions within the range $(t_{i-1},t_i)$ and $t_0=0$. Therefore, the likelihood function of the intensity $\lambda$ can be written as: 
\begin{equation}
    L(\lambda) = \prod_{i=1}^n \lambda (t_i - t_{i-1}) \cdot e^{-\lambda (t_i - t_{i-1})}
\end{equation}
Thus, the log-likelihood of $\lambda$ is: 
\begin{equation}
    l(\lambda) = \sum_{i=1}^n \ln(\lambda) + \ln(t_i - t_{i-1}) - \lambda (t_i - t_{i-1})
\end{equation}
By setting the derivative $\dfrac{dl}{d\lambda}=0$, we get the maximum likelihood estimation of $\lambda$: 
\begin{equation}
    \lambda_{MLE} = \dfrac{n}{t_n}
\end{equation}
which concludes the proof. 
\end{proof}

Lemma \ref{theorem::poisson} enables us to characterize the interaction intensity by $n / t_n$. By integrating both the intensity property and recentness property, we specify the temporal distance defined in Eq. (\ref{equation::TD1}) as: 
\begin{equation}
\label{equation::TD2}
    \mathrm{TD}(w,w_0,t_{pred}) = \alpha * \dfrac{t_n}{t_{pred} \cdot n} + \beta * \dfrac{t_{pred}-t_n}{t_{pred}} 
\end{equation}
where $\alpha, \beta>0$ are hyper-parameters. Note that if $w$ and $w_0$ do not have interaction before, we set $\mathrm{TD}(w,w_0, t_{pred})$ as a very large value. If $w=w_0$, $ \mathrm{TD}(w,w_0,t_{pred})$ is set as 0. For implementation, due to the sparsity of CTDGs, we only need to store the most recent interaction timestamp $t_n$ and the interaction times count $n$ for the node pairs that have interactions before, which leads to the memory complexity significantly less than $\mathcal{O}(|\mathcal{E}|)$. To prevent the problem of information leakage, when computing the temporal distance in Eq. (\ref{equation::TD2}), we use the recorded $t_n$ and $n$  until the \textit{previous} mini-batch of interactions. After the complete forward propagation of the current mini-batch, we update $t_n$ and $n$ records using the interactions of the current mini-batch.


\subsubsection{{Spatial-Temporal Positional Encoding.}}
\label{subsection::stpe-c}
The aforementioned spatial distance and temporal distance encode the direct proximity between two nodes on CTDG in scalars. However, the expressiveness of self-attention will be restricted if we directly use the scalar distance as inputs. In addition, this encoding function should learn the difference of spatial-temporal distance among contextual nodes more effectively. Inspired by \cite{vaswani2017attention}, we use the sinusoidal function $\mathrm{Enc}: \mathbb{R^+} \rightarrow \mathbb{R}^{2d}$ as the encoding function: 
\begin{equation}
\label{eq::STPE_u}
\begin{split}
\mathrm{Enc}(x)[2i] = \sin\left(\frac{\epsilon x}{10000^{2i/d}}\right) \\
\mathrm{Enc}(x)[2i+1] = \cos\left(\frac{\epsilon x}{10000^{2i/d}}\right)
\end{split}
\end{equation}
where $\epsilon$ is used to amplify the influence of $x$ on different positions of the encoding, and we set $\epsilon=10000$ in this work. Further, the Unitary Spatial-Temporal Positional Encoding (STPE-U) of the contextual node $w$ with respect to the single target node $w_0$ at time $t_{pred}$ is defined as: 
\begin{equation}
\begin{split}
    \mathrm{STPE\_u}(w;w_0, t_{pred}) = &\mathrm{MLP}(\mathrm{Enc}(\mathrm{TD}(w,w_0,t_{pred}))) || \\ &\mathrm{MLP}(\mathrm{Enc}(\mathrm{SD}(w;w_0,t_{pred})))
\end{split}
\end{equation}
where $||$ denotes the concatenation operation, and MLP denotes Multi-Layer Perceptions. 
The above defined STPE-U can characterize the spatial-temporal first-order proximity of the contextual node to the single target \textit{node}. To address the spatial-temporal high-order proximity between the target node pair $(u,v)$, the contextual nodes $w \in \mathcal{C}(u,t_{pred}) \cup \mathcal{C}(v,t_{pred})$ are leveraged as the auxiliary nodes for the target node pair $(u,v)$. Specifically, we propose the Correlated Spatial-Temporal Positional Encoding (STPE-C) of the contextual node $w$ as the combination of its STPE-U to both target nodes: 
\begin{equation}
\begin{split}
\label{eq::stpe-c}
    \mathrm{STPE\_c}(w; (u,v), t_{pred})= &\mathrm{STPE\_u}(w;u,t_{pred}) + \\ &\mathrm{STPE\_u}(w;v,t_{pred})
\end{split}
\end{equation}

\subsection{Network Architecture}
\label{subsection::network_architecture}
In this section, we present the network architecture of \model, which is a Transformer based model incorporating the structural information of CTDG. The input of \modell is the node embeddings of contextual nodes  $X=[x(w_1),...,x(w_C)] \in \mathbb{R}^{C \times d_{emb}}$ where $C=|\mathcal{C}(w_0,t_{pred})| (w_0 \in \{u,v\})$ is the size of contextual nodes set. The node embedding $x(w_i)$ is the concatenation of raw node feature of $w_i$ and its STPE-C with respect to the target link $(u,v)$ at time $t_{pred}$. 
In addition, some CTDG datasets may provide features associated with interaction events, which contains important semantic information about the correlations between the contextual node pairs. Therefore, we modify the self-attention module of the original Transformer to incorporate the event feature. Specifically, the event feature matrix is defined as $E=[e_{ij}]_{1\leq i,j \leq C} $, where $e_{ij}$ is the event feature if node $i$ and $j$ are interacted during contextual node sampling period otherwise a zero vector. The input node embedding $X$ is denoted as $H^{(0)}$. Then, the self-attention module of \modell (\model Attn) is defined as:
\begin{equation}
\label{eq:attn}
\begin{split}
   H^{(l)}_i &= {\rm \model Attn} (H^{(l-1)},E) \\
   &= \sum_{j=1}^L M_{ij} * {\rm Softmax}(A_{ij}^{(l)} (H_j^{(l-1)} W_V^{(l)}+e_{ij}W_{EV}^{(l)}) \\
    {\rm Where} \quad A_{ij}^{(l)} &= \dfrac{h_i^{(l-1)}W_Q^{(l)}(h_j^{(l-1)}W_K^{(l)}+e_{ij}W_{EK}^{(l)})^T}{\sqrt{d_k}}
\end{split}
\end{equation}
where $H^{(l)}_i$ denotes the $i$-th row of the matrix $H^{(l)}$. $W_Q^{(l)}, W_K^{(l)} \in \mathbb{R}^{d \times d_K}$, $W_V^{(l)} \in \mathbb{R}^{d \times d_V}, W_{EK} \in \mathbb{R}^{d_e \times d_K}, W_{EV}^{(l)} \in \mathbb{R}^{d_e \times d_V}$ are weight matrices. For simplicity, we set $d_K=d_V=d$ for the intermediate layers. In addition, $M \in \mathbb{R}^{C \times C}$ in Eq. (\ref{eq:attn}) is a masking matrix defined as follow:
\begin{eqnarray}
    M_{ij} =\left\{
    \begin{aligned}
        1 & , & t_i < t_j \quad {\rm and} \quad hop_i \geq hop_j \\
        0 & , & {\rm otherwise} \\
    \end{aligned}
    \right.
\end{eqnarray}
where $t$ and $hop$ are the timestamps and hop numbers obtained in contextual node sampling period. This masking matrix ensures that messages can only pass from the history to future, and from farther temporal neighbors to the closer temporal neighbors. 

Following the common practice of Transformer models, we adopt the the Layer Normalization (LN) \cite{ba2016layer} and residual connection \cite{he2016deep} in our \modell layer. For easier optimization, we adopt a Pre-Norm architecture \cite{xiong2020layer} where the Layer Normalization is applied before \model Attn and Feed-Forwad Networks (FFN). Formally, the \modell layer is defined as follows:
\begin{equation}
\label{eq::dgt_layer}
\begin{split}
     H'^{(l)} &= {\rm \model Attn}({\rm LN}(H^{(l-1)}),E)+H^{(l-1)} \\
     H^{(l)} &= {\rm FFN}({\rm LN}(H'^{(l)}))+H'^{(l)}\\
\end{split}
\end{equation}
where $L$ is the total number of layers. Multi-head self attention \cite{vaswani2017attention} can also be adopted to further enhance the expressive power of \model. The output of \modell layer $H^{(L)} \in \mathbb{R}^{C \times d}$ are the embeddings of contextual nodes. The embedding of the root node $w_0 (w_0 \in \{u,v\})$ at time $t_{pred}$, denoted as $z(w_0, t_{pred})$, is obtained by applying mean pooling on the node embeddings of its associated contextual nodes $\mathcal{C}(w_0,t_{pred})$:
\begin{equation}
\label{eq::root_node_embedding}
    z(w_0,t_{pred}) = \dfrac{1}{C}\sum_{i=1}^C H^{(L)}(i,:)
\end{equation}

\subsection{Training Objective}
 Given the target link $(u,v,t_{pred})$, the node embeddings $z(u,t_{pred})$ and $z(v,t_{pred})$ can be computed using Eq. (\ref{eq::root_node_embedding}). Then, the predicted score of $(u,v,t_{pred})$ is computed as:
 \begin{equation}
 \label{eq::merge_func}
     S(u,v,t_{pred}) = \sigma(\mathrm{MLP}(z(u,t_{pred}),z(v,t_{pred})))
 \end{equation}
 where $\sigma$ denotes the Sigmoid function.Finally, the Binary Cross Entropy (BCE) loss is adopted to train \model: 
\begin{equation}
\label{eq::loss}
\begin{split}
    \mathcal{L}=\sum_{(u,v,t)} &- \log S(u,v,t_{pred}) \\ &- \mathbb{E}_{ r \sim Unif(\mathcal{V} \backslash \{u,v\}}) \log (1-S(u,r,t_{pred}))
\end{split}
\end{equation}
where $Unif(\cdot)$ denotes a uniform sampling distribution on the node set. The overall training pipeline is in Appendix.

\subsection{Complexity Analysis}
In this section, we analyze the time and spatial complexity of the proposed \modell model. For time complexity, given a mini-batch of interactions of size $B$, sampling the contextual nodes requires to binary-search the insertion point of the timestamp and costs $O(B\log(\bar{d}))$, where $\bar{d}$ is the average degree of nodes. Computing the Temporal Distance costs $O(BC)$ complexity, where $C$ is the number of contextual nodes. Computing the Spatial Distance costs $O(K)$ complexity where $K$ is the maximum hop of contextual nodes. Forwarding the model costs $O(C^2HD)$ due to self-attention operation, where $H$ is the number of attention heads and $D$ is the hidden dimension of weights. Therefore, the time complexity of training \modell is $O(B(\log(\bar{d})+C)+C^2HD)$. For spatial complexity, storing the statistics of interactions cost $O(|\mathcal{E}|)$, where $|\mathcal{E}|$ is the total number of interactions. This spatial complexity is inevitable for learning CTDG models due to the CTDG data loading. The complexity comparison with other methods is presented in Appendix.

\section{Experiments}
\label{sec::exp}

\subsection{Experimental setup}

\label{sec::exp_setup}
\paragraph{\textbf{Datasets.}}
We evaluate the proposed model on nine Continuous-Time Dynamic Graph (CTDG) datasets: Reddit, Wikipedia, LastFM, UCI, Enron, Social Evolution, Flights, Gowalla-Food, and Gowalla-Outdoors. Among these, Reddit and Wikipedia constitute bipartite networks abundant in node/edge attributes, while LastFM represents a bipartite network devoid of node features. UCI, Enron, Social Evolution, and Flights are non-bipartite communication networks, also lacking attributes. Gowalla-Food and Gowalla-Outdoors are two large-scale datasets derived from the primary Gowalla dataset. For datasets without meaningful node features (i.e., LastFM, UCI, and Enron), we employ zero vectors as node features. Further details regarding these datasets are provided in Appendix.

\paragraph{\textbf{Baselines.}}

We compare the proposed \modell with three types of CTDG models:  (1) GNN-based: DyRep \cite{trivedi2019dyrep}, TGAT \cite{xuiclr2020tgat}, TGN \cite{rossi2020temporal} and Graphmixer \cite{cong2023we}. (2) Random walk based: CTDNE \cite{nguyen2018continuous} and CAW \cite{wangiclr2021caw}. (3) Transformer based: TCL \cite{wang2021tcl} and TGSRec \cite{fan2021continuous}.   More introductions about the baseline methods and tuned hyper-parameters are presented in Appendix.

\paragraph{\textbf{Evaluation Protocols.}}
Our evaluation protocols closely follow \cite{rossi2020temporal}. In specific, we adopt transductive/inductive link prediction and dynamic node classification tasks for evaluation. For \textit{Transductive} link prediction task, we split the total time range $[0,T]$ into three seperate intervals $[0, T_{train})$, $[T_{train}, T_{val}]$ and $[T_{val}, T)$ with $T_{train}/T=0.7$ and $T_{val}/T=0.85$ fixed. Then, we allocate the interactions happening within each interval to generate the training, validation and testing set. The inductive link prediction task follows the same splitting protocol as the transductive experiments. However, we randomly select 10\% of the nodes as "masking nodes", excluding any links associated with them in the training set, and removing any links not associated with them in the validation and testing sets. 

\paragraph{\textbf{Training Configurations.}}
We train all the models for 50 epochs and adopt the early stopping strategies. We adopt Adam optimizer and learning rate of 0.001 for all the tasks. Early stopping strategy is adopted. The batch size is set as 100. We sample 2-hop temporal neighbors for all datasets. More details about hyper-parameter of \modell and other baselines are presented in Appendix.


\begin{table*}[t]

\centering
\caption{The Average Precision (AP) results of transductive/inductive link prediction are reported. The values are multiplied by 100. The results of the best and second best performing models are highlighted in \textbf{bold} and \underline{underlined}, respectively.}
\label{table::link_prediction_ap}
\scalebox{0.9}{
\begin{tabular*}{\textwidth}{@{\extracolsep{\fill}}ccccccccc}
\toprule[1.0pt]
                               & Model     & Reddit        & Wikipedia            & LastFM        & UCI             & Enron  &  Social Evo. & Flights  \\
\midrule
\multirow{8}{*}{\rotatebox{90}{Transductive}} 
                               & JODIE    & 97.18$\pm$0.2 & 94.09$\pm$0.5  & 70.89$\pm$0.8 & 85.91$\pm$0.9 &  74.73$\pm$4.6 & 88.24$\pm$0.5 & 95.23$\pm$1.6 \\
                               & DyRep     & 98.09$\pm$0.1 & 94.54$\pm$0.3  & 68.45$\pm$3.2 & 52.94$\pm$0.8   & 69.36$\pm$3.4 & 87.93$\pm$0.3 & 94.97$\pm$0.6\\
                               & TGAT      & 98.15$\pm$0.1 & 94.69$\pm$0.1  & 54.42$\pm$0.9 & 77.75$\pm$0.2  & 58.91$\pm$0.3 & 92.15$\pm$0.2 & 94.76$\pm$0.2\\
                               & TGN  & 98.70$\pm$0.0 & 98.49$\pm$0.1  & 72.43$\pm$2.9 & 85.57$\pm$4.5 & 77.45$\pm$3.3 & \underline{93.27$\pm$0.1} & 97.23$\pm$0.1  \\
                               & CAWN & \underline{98.72$\pm$0.0} & \underline{98.79$\pm$0.1}  & \underline{85.52$\pm$0.3} & 92.51$\pm$0.1 & \underline{89.32$\pm$0.1} & 86.39$\pm$0.1 & \underline{98.12$\pm$0.3} \\
                               & TGSRec    & 88.56$\pm$2.4 & 85.68$\pm$1.0  & 67.60$\pm$3.9 & 76.00$\pm$0.5 &  69.46$\pm$2.8 & 74.51$\pm$0.7 & 94.18$\pm$0.3 \\
                               & TCL & 97.68$\pm$0.2  & 96.82$\pm$0.3  & 70.88$\pm$1.5  & 89.43$\pm$1.0  &   80.84$\pm$0.4 & 93.25$\pm$0.1 & 91.38$\pm$0.3  \\   
                               & Graphmixer &  97.42$\pm$0.0  &  97.25$\pm$0.1  &  78.26$\pm$0.4  &  \underline{93.46$\pm$0.3}  &  83.28$\pm$0.2 & 92.93$\pm$0.1 & 91.37$\pm$0.6 \\
\cmidrule{2-9}
                               & \modell      & \textbf{99.18$\pm$0.0} & \textbf{99.07$\pm$0.1}  & \textbf{92.23$\pm$0.1} & \textbf{96.03$\pm$0.2} &  \textbf{91.76$\pm$0.5}  & \textbf{93.81$\pm$0.2} & \textbf{98.81$\pm$0.0}\\
\midrule
  & Improve & 0.46 & 0.28 & 6.71 & 2.57 & 2.44 & 0.54 & 0.69 \\
\midrule
\multirow{10}{*}{\rotatebox{90}{Inductive}} 
                               & JODIE  & 94.46$\pm$0.1  & 92.98$\pm$0.1 & 82.88$\pm$1.3 & 72.93$\pm$0.7  &  72.97$\pm$2.0 & 91.49$\pm$0.5 & 95.17$\pm$0.4 \\
                               & DyRep     & 95.81$\pm$0.3 & 92.01$\pm$0.4  & 80.43$\pm$2.2 & 49.42$\pm$0.8 &  57.54$\pm$3.3 & 89.82$\pm$0.5 & 93.58$\pm$0.7 \\
                               & TGAT     & 97.18$\pm$0.6 & 93.50$\pm$0.1  & 55.63$\pm$1.7 & 70.96$\pm$0.7 &  58.44$\pm$2.4 & 90.74$\pm$0.4 & 89.46$\pm$0.3 \\
                               & TGN     & \underline{97.54$\pm$0.1} & \underline{97.83$\pm$0.1}  & 79.16$\pm$2.7 & 82.99$\pm$1.2 &  72.08$\pm$2.8 & 90.46$\pm$0.6 & 95.63$\pm$0.3 \\
                               & CAWN    & 97.19$\pm$0.6 & 97.33$\pm$0.4  & \underline{83.51$\pm$0.6} & 91.95$\pm$0.6  &  \underline{86.28$\pm$2.2} & 79.14$\pm$0.2 & \underline{96.80$\pm$0.2} \\
                               & TGSRec & 82.08$\pm$5.4 & 78.39$\pm$0.3 & 68.42$\pm$3.8 & 65.46$\pm$0.7  &  68.83$\pm$3.8 & 68.85$\pm$1.6 & 87.64$\pm$0.3 \\
                               & TCL & 93.89$\pm$0.5 & 97.08$\pm$0.2 & 72.56$\pm$1.7 & 88.37$\pm$1.8 &  77.28$\pm$0.8 & 91.67$\pm$0.1 & 83.41$\pm$0.1 \\
                               & Graphmixer  & 97.38$\pm$0.0 &  97.10$\pm$0.0 & 83.11$\pm$0.4 & \underline{92.10$\pm$0.6}  & 75.84$\pm$1.2 & \underline{92.03$\pm$0.1} & 83.03$\pm$0.1 \\
\cmidrule{2-9}
                               & \modell       & \textbf{98.82$\pm$0.1} & \textbf{98.48$\pm$0.2} & \textbf{93.03$\pm$1.8} & \textbf{94.70$\pm$1.0}  & \textbf{91.65$\pm$1.3} & \textbf{94.44$\pm$0.8} & \textbf{97.46$\pm$0.4}   \\
\midrule
& Improve & 1.28 & 0.65 & 9.52 & 2.60 & 5.37 & 2.41 & 0.66 \\
\bottomrule[1.0pt]
\end{tabular*}
}
\end{table*}

\subsection{Results and Discussion}

The Average Precision (AP) scores of transductive and inductive link prediction experiments are presented in Table \ref{table::link_prediction_ap}. The Area Under the receiver operating Characteristic (AUC) results are presented in Appendix. 
As can be seen from Table \ref{table::link_prediction_ap}, our \modell achieves the best AP performance on all the datasets for both transductive and inductive settings. Specifically, the transductive AP and inductive AP of \modell show an average improvement of 1.95\% and 3.21\%, respectively, demonstrating the effectiveness of \model. 


In addition, we make the following observations: (1) Our proposed model demonstrates robust performance across both attributed networks (Reddit and Wikipedia) and non-attributed datasets (LastFM, UCI, MOOC, and UCI). In contrast, the performance of several baseline methods, which lack node encodings designed for the evolving laws of CTDGs (such as Jodie, DyRep, and TGAT), significantly decreases on non-attributed datasets. This suggests the efficacy of our proposed STPE-C in modeling the evolving nature of CTDGs. (2) On the LastFM and Social Evolution datasets, our model improves the inductive Average Precision (AP) by 6.71\% and 9.52\% over the strongest baseline, respectively. This may be attributed to the significantly higher average interaction intensity of the Social Evolution ($\lambda=2.73 \times 10^{-3}$) and LastFM ($\lambda=5.04 \times 10^{-4}$) datasets compared to others. As such, the intensity may play a more crucial role in proximity estimation. Our proposed model excels at capturing long-term interaction intensity via temporal distance. (3) When compared to the parametric intensity-based method (DyRep), our proposed model displays significantly improved performance. This improvement can be attributed to the incorporation of parametric-free intensity into high-order proximity encoding. The results of node classification is presented in Appendix.

\subsection{Ablation Studies}

\begin{table*}[htbp]
\centering
\caption{Ablation studies results. Results of inductive link prediction are reported. The values are multiplied by 100. The best performance is marked in \textbf{bold}.}
\label{table::ablation_study}
\scalebox{0.9}{
\begin{tabular}{ccccccccc}
\toprule[1.0pt]
\multirow{2}{*}{Ablations}                                      & \multicolumn{2}{c}{Reddit} & \multicolumn{2}{c}{Wikipedia} & \multicolumn{2}{c}{LastFM} & \multicolumn{2}{c}{UCI} \\
\cmidrule(lr){2-3} \cmidrule(lr){4-5} \cmidrule(lr){6-7} \cmidrule(lr){8-9}
                                                                & AUC          & AP          & AUC            & AP           & AUC          & AP          & AUC         & AP        \\
\midrule
Full Model                                                      & \textbf{98.67$\pm$0.1}  &           \textbf{98.82$\pm$0.1} & \textbf{98.26$\pm$0.4}  & \textbf{98.48$\pm$0.2} &  \textbf{93.01$\pm$2.0} &  \textbf{93.03$\pm$1.8} & \textbf{93.20$\pm$1.8} &  \textbf{94.70$\pm$1.0}           \\
\midrule
1. Set $\alpha=0$ in Eq. (\ref{equation::TD2}) &  98.09$\pm$0.0   &     98.24$\pm$0.1 &  98.17$\pm$0.0
 &  98.28$\pm$0.3 & 89.10$\pm$0.6  &  92.07$\pm$0.4 &  91.30$\pm$0.3 &  91.52$\pm$0.3        \\
2. Set $\beta=0$ in Eq. (\ref{equation::TD2})  &  98.10$\pm$0.0     &     98.19$\pm$0.1 &  97.87$\pm$0.0
 &   98.18$\pm$0.2 &  89.55$\pm$0.6 &  90.80$\pm$1.7 & 90.46$\pm$0.3  &  94.28$\pm$0.8           \\
3. Remove temporal distance                                                               &  97.91$\pm$0.1   &  98.33$\pm$0.3 &  97.81$\pm$0.1  &   97.11$\pm$0.1 &   88.43$\pm$0.1 & 86.68$\pm$0.4 &  92.19$\pm$0.1
 & 91.20$\pm$0.3          \\
4. Remove spatial distance                                                               &   97.81$\pm$0.0   &      98.02$\pm$0.1 &  97.62$\pm$0.0  &  97.36$\pm$0.4 &  92.47$\pm$0.1  & 91.43$\pm$0.5 & 90.15$\pm$0.1 & 92.37$\pm$1.0          \\
5. Replace STPE-C with STPE-U                                                              &  96.99$\pm$0.2  &  93.87$\pm$1.1 &  89.22$\pm$2.2 &  75.38$\pm$5.1 &  64.82$\pm$0.7 &  50.03$\pm$2.9 &  78.59$\pm$2.2
 & 64.32$\pm$4.6          \\
6. Remove mask in Eq. (\ref{eq:attn})                                                               &   98.14$\pm$0.0   &   98.19$\pm$0.0 &  98.08$\pm$0.1  & 98.19$\pm$0.1 &  88.95$\pm$0.1 & 85.81$\pm$1.4 & 90.31$\pm$0.1
 &  92.05$\pm$0.3    \\
7. Use recent sampling  & 97.49$\pm$0.1   &  97.36$\pm$0.1   &  96.44$\pm$0.3  &  96.93$\pm$0.2  & 87.23$\pm$0.8  & 85.91$\pm$0.1 & 86.25$\pm$1.6  & 87.15$\pm$1.5  \\
\bottomrule[1.0pt]
\end{tabular}
}
\end{table*}

In this subsection, we conduct ablation studies to evaluate the effectiveness of different modules of \modell. The inductive AP and AUC results are shown in Table \ref{table::ablation_study}. The results are analyzed as follows: (1) In Ablations 1 and 2, we eliminate the recentness term and intensity term from the temporal distance calculation, respectively. As evidenced by the results, the removal of either term leads to a performance degradation across all datasets, especially on LastFM. This demonstrate the significance of both recentness and intensity in computing temporal distance. (2) In Ablations 3 and 4, we exclude the spatial distance and temporal distance from the STPE-C component, respectively. The results indicate that the performance across all datasets, particularly non-attributed ones, is compromised when either spatial distance or temporal distance is removed. This highlights the importance of modeling both spatial and temporal proximity in learning on CTDGs. (3)  In Ablation 5, we substitute the binary STPE-C with STPE-U as defined in Eq. \ref{eq::STPE_u}. In this scenario, the high-order proximity between the target nodes fails to be captured. We observe a significant drop in performance across all datasets, underscoring the critical role of modeling spatial-temporal high-order proximity in learning the evolving patterns of CTDGs. (4) In Ablation 6, we remove the masking matrix utilized in the Transformer model, which also leads to a performance decline across all datasets. (5) In Ablation 7, we replace the uniform contextual nodes sampling with the most recent sampling (i.e., the most recent interacted neighbors are sampled as the contextual nodes). The performance of \modell drops on all four datasets. The reason may be that the diversity of neighbors decrease when replaced with most recent sampling strategy, thus the high-order proximity estimation may be less accurate.

\subsection{Scalability Analysis}
\begin{figure}
    \centering
    \includegraphics[width=0.8\linewidth]{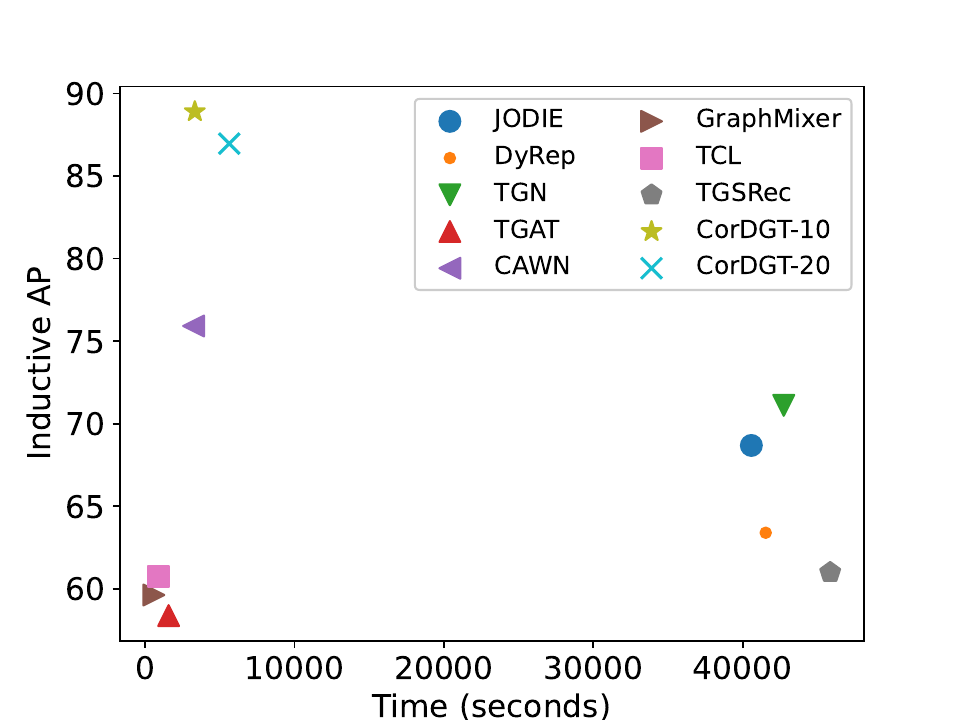}
    \caption{The inductive AP metrics and the training time per epoch on Food datasets. The closer to the upper left corner, the better performance. CorDGT-10 and CorDGT-20 denote the \modell with 10 and 20 contextual nodes, respectively.}
    \label{fig::gowalla-food-time}
\end{figure}

In this section, we evaluate the performance and efficiency of the proposed \modell on large-scale datasets. We adopt \textit{Outdoors} and \textit{Food} subsets from the large Gowalla \cite{liu2014exploiting} dataset for evaluation. The Outdoors dataset contains around 0.22M nodes and 1.19M edges. The Food dataset has around 0.67M nodes and 2.71M edges. We run all the models for one epoch and compare the performance. The inductive AP metrics and the training speed per epoch of the Food dataset are presented in Figure \ref{fig::gowalla-food-time}. The results of Outdoors are presented in Appendix. 

As can be seen from Figure \ref{fig::gowalla-food-time}, both two configurations of \modell can achieve consistently outperform the baselines in terms of inductive AP. In addition, training one epoch of \model with 10 contextual neighbors takes 3319 seconds, which is significantly faster than memory-based models (JODIE, DyRep and TGN). It is because the memory-based models need to store the memory state for each node. Given the vast number of nodes in large-scale datasets, the process of storing and managing the memory state incurs a considerable computational cost.  Our proposed \modell is marginally slower than CAWN (3234 seconds), but has significant better inductive AP performance.

\subsection{Visualization}

\begin{figure}
    \centering
    \includegraphics[width=0.9\linewidth]{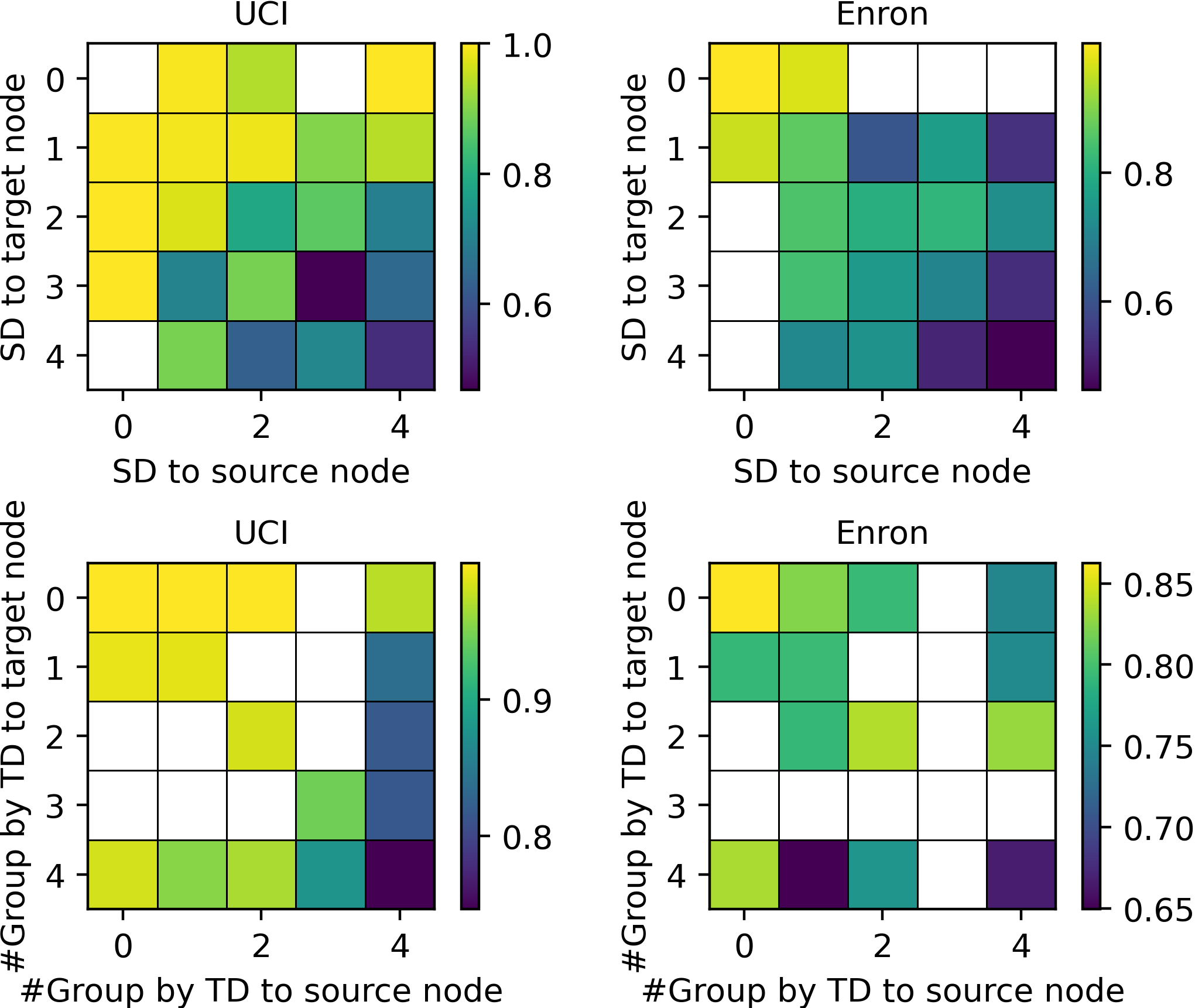}
    \caption{Heatmap values indicate the confidence score on positive source/target node pairs predicted by different groups of contextual nodes. Left to right: UCI and Enron datasets. Top to bottom: the contextual nodes are grouped according to their Spatial Distance and Temporal Distance to source/target nodes. The blank cells indicate that no data is allocated to this group. Best viewed in color. }
    \label{fig::visualize}
\end{figure}

One motivation of the proposed \modell is to capture the spatial-temporal high-order proximity in CTDGs by considering the distance of the contextual nodes to both ends of target nodes. 
To further demonstrate the interpretability of the proposed \model, we visualize the link prediction score by the contextual nodes with different Temporal Distance (TD) and Spatial Distance (SD) to the target nodes. 
Specifically, we replace Eq. (\ref{eq::merge_func}) as: 
\begin{equation}
    S(u,v,t_{pred}) = \sigma(\Phi^T(z(u,t_{pred})+z(v,t_{pred})))
\end{equation}
where $\Phi: \mathbb{R}^d \rightarrow \mathbb{R}^1$ is a trainable linear projector, and train the model.
In this way, the contribution of each contextual node $w$ to the link prediction score can be decomposed as $\Phi^T(h_w)$. 
After training, we randomly select a mini-batch of contextual nodes. To see the influence of temporal distance to the predicted score, we evenly split the range of TD to target node pairs in this mini-batch to 5 groups, which formulates total 25 buckets. Then, we allocate the contextual nodes into these buckets based on their TD to target node pairs. Similarly, we can allocate the contextual nodes into different buckets based on their SD to target node pairs. Finally, we compute the average prediction score of the contextual nodes in each bucket. The visualization results on UCI and Enron are shown in Figure \ref{fig::visualize}. As can be seen from Figure \ref{fig::visualize}, the contextual nodes have closer temporal distance (closer to top-left corner) to both ends of the target link will give higher prediction score of the link (closer to yellow). This indicates that the proposed \modell will give higher prediction if the contextual nodes have smaller temporal distance to target node pairs. Similar observations are also seen from spatial distance. Therefore, the proposed \modell may be capable to capture the spatial-temporal high-order proximity. 



\section{Conclusions}
This paper introduces the Dynamic Graph Transformer with Correlated Spatial-Temporal Positional Encoding (\model), a novel approach for representation learning on Continuous-Time Dynamic Graphs (CTDGs). We employ the Poisson Point Process assumption and sampled temporal neighborhood to achieve comprehensive proximity estimation on CTDGs. Subsequently, we propose Correlated Spatial-Temporal Positional Encodings (STPE-C), which utilizes the comprehensive proximity to capture spatial-temporal high-order proximity. Extensive experiments conducted on seven small and two large-scale datasets demonstrate the performance superiority and scalability of the proposed \modell model. A potential future direction for this work could involve designing more sophisticated spatial-temporal distances for improved proximity estimation and preservation. 

\section*{ACKNOWLEDGMENTS}
This work is supported by the National Natural Science Foundation of China (62476244), Zhejiang Provincial Natural Science Foundation of China (Grant No: LTGG23F030005), National Natural Science Foundation of China (62372399,62476245) and the advanced computing resources provided by the Supercomputing Center of Hangzhou City University.

\section*{Ethical Considerations}
The proposed \modell is used to learn the temporal embeddings which can be leveraged for downstream tasks such as link prediction and node classification. The direct negative societal effects of this research, encompassing fairness, privacy, and security considerations, are minimal. Nevertheless, akin to other predictive models, a few erroneous predictions by the model could impact system functionality. Despite extensive experimental validation of the model's efficacy, occasional inaccurate predictions, particularly on outlier data, remain plausible. Therefore, enhancing data quality through measures such as data cleaning prior to model application is advised.

\bibliographystyle{ACM-Reference-Format}
\bibliography{ref}


\begin{thebibliography}{50}


\ifx \showCODEN    \undefined \def \showCODEN     #1{\unskip}     \fi
\ifx \showDOI      \undefined \def \showDOI       #1{#1}\fi
\ifx \showISBNx    \undefined \def \showISBNx     #1{\unskip}     \fi
\ifx \showISBNxiii \undefined \def \showISBNxiii  #1{\unskip}     \fi
\ifx \showISSN     \undefined \def \showISSN      #1{\unskip}     \fi
\ifx \showLCCN     \undefined \def \showLCCN      #1{\unskip}     \fi
\ifx \shownote     \undefined \def \shownote      #1{#1}          \fi
\ifx \showarticletitle \undefined \def \showarticletitle #1{#1}   \fi
\ifx \showURL      \undefined \def \showURL       {\relax}        \fi
\providecommand\bibfield[2]{#2}
\providecommand\bibinfo[2]{#2}
\providecommand\natexlab[1]{#1}
\providecommand\showeprint[2][]{arXiv:#2}

\bibitem[Ba et~al\mbox{.}(2016)]%
        {ba2016layer}
\bibfield{author}{\bibinfo{person}{Jimmy~Lei Ba}, \bibinfo{person}{Jamie~Ryan Kiros}, {and} \bibinfo{person}{Geoffrey~E Hinton}.} \bibinfo{year}{2016}\natexlab{}.
\newblock \showarticletitle{Layer normalization}.
\newblock \bibinfo{journal}{\emph{arXiv preprint arXiv:1607.06450}} (\bibinfo{year}{2016}).
\newblock


\bibitem[Chang et~al\mbox{.}(2020)]%
        {chang2020continuous}
\bibfield{author}{\bibinfo{person}{Xiaofu Chang}, \bibinfo{person}{Xuqin Liu}, \bibinfo{person}{Jianfeng Wen}, \bibinfo{person}{Shuang Li}, \bibinfo{person}{Yanming Fang}, \bibinfo{person}{Le Song}, {and} \bibinfo{person}{Yuan Qi}.} \bibinfo{year}{2020}\natexlab{}.
\newblock \showarticletitle{Continuous-time dynamic graph learning via neural interaction processes}. In \bibinfo{booktitle}{\emph{Proceedings of the 29th ACM International Conference on Information \& Knowledge Management}}. \bibinfo{pages}{145--154}.
\newblock


\bibitem[Chen et~al\mbox{.}(2023)]%
        {chen2023bias}
\bibfield{author}{\bibinfo{person}{Jiawei Chen}, \bibinfo{person}{Hande Dong}, \bibinfo{person}{Xiang Wang}, \bibinfo{person}{Fuli Feng}, \bibinfo{person}{Meng Wang}, {and} \bibinfo{person}{Xiangnan He}.} \bibinfo{year}{2023}\natexlab{}.
\newblock \showarticletitle{Bias and debias in recommender system: A survey and future directions}.
\newblock \bibinfo{journal}{\emph{ACM Transactions on Information Systems}} \bibinfo{volume}{41}, \bibinfo{number}{3} (\bibinfo{year}{2023}), \bibinfo{pages}{1--39}.
\newblock


\bibitem[Cong et~al\mbox{.}(2021)]%
        {cong2021dynamic}
\bibfield{author}{\bibinfo{person}{Weilin Cong}, \bibinfo{person}{Yanhong Wu}, \bibinfo{person}{Yuandong Tian}, \bibinfo{person}{Mengting Gu}, \bibinfo{person}{Yinglong Xia}, \bibinfo{person}{Mehrdad Mahdavi}, {and} \bibinfo{person}{Chun-cheng~Jason Chen}.} \bibinfo{year}{2021}\natexlab{}.
\newblock \showarticletitle{Dynamic graph representation learning via graph transformer networks}.
\newblock \bibinfo{journal}{\emph{CoRR abs/2111.10447}} (\bibinfo{year}{2021}).
\newblock


\bibitem[Cong et~al\mbox{.}(2023)]%
        {cong2023we}
\bibfield{author}{\bibinfo{person}{Weilin Cong}, \bibinfo{person}{Si Zhang}, \bibinfo{person}{Jian Kang}, \bibinfo{person}{Baichuan Yuan}, \bibinfo{person}{Hao Wu}, \bibinfo{person}{Xin Zhou}, \bibinfo{person}{Hanghang Tong}, {and} \bibinfo{person}{Mehrdad Mahdavi}.} \bibinfo{year}{2023}\natexlab{}.
\newblock \showarticletitle{Do We Really Need Complicated Model Architectures For Temporal Networks?}
\newblock \bibinfo{journal}{\emph{International Conference on Learning Representations}} (\bibinfo{year}{2023}).
\newblock


\bibitem[Dong et~al\mbox{.}(2021)]%
        {dong2021equivalence}
\bibfield{author}{\bibinfo{person}{Hande Dong}, \bibinfo{person}{Jiawei Chen}, \bibinfo{person}{Fuli Feng}, \bibinfo{person}{Xiangnan He}, \bibinfo{person}{Shuxian Bi}, \bibinfo{person}{Zhaolin Ding}, {and} \bibinfo{person}{Peng Cui}.} \bibinfo{year}{2021}\natexlab{}.
\newblock \showarticletitle{On the equivalence of decoupled graph convolution network and label propagation}. In \bibinfo{booktitle}{\emph{Proceedings of the Web Conference 2021}}. \bibinfo{pages}{3651--3662}.
\newblock


\bibitem[Dwivedi and Bresson(2020)]%
        {dwivedi2020generalization}
\bibfield{author}{\bibinfo{person}{Vijay~Prakash Dwivedi} {and} \bibinfo{person}{Xavier Bresson}.} \bibinfo{year}{2020}\natexlab{}.
\newblock \showarticletitle{A generalization of transformer networks to graphs}.
\newblock \bibinfo{journal}{\emph{arXiv preprint arXiv:2012.09699}} (\bibinfo{year}{2020}).
\newblock


\bibitem[Dwivedi et~al\mbox{.}(2021)]%
        {dwivedi2021graph}
\bibfield{author}{\bibinfo{person}{Vijay~Prakash Dwivedi}, \bibinfo{person}{Anh~Tuan Luu}, \bibinfo{person}{Thomas Laurent}, \bibinfo{person}{Yoshua Bengio}, {and} \bibinfo{person}{Xavier Bresson}.} \bibinfo{year}{2021}\natexlab{}.
\newblock \showarticletitle{Graph neural networks with learnable structural and positional representations}.
\newblock \bibinfo{journal}{\emph{International conference on learning representations}} (\bibinfo{year}{2021}).
\newblock


\bibitem[Fan et~al\mbox{.}(2021)]%
        {fan2021continuous}
\bibfield{author}{\bibinfo{person}{Ziwei Fan}, \bibinfo{person}{Zhiwei Liu}, \bibinfo{person}{Jiawei Zhang}, \bibinfo{person}{Yun Xiong}, \bibinfo{person}{Lei Zheng}, {and} \bibinfo{person}{Philip~S Yu}.} \bibinfo{year}{2021}\natexlab{}.
\newblock \showarticletitle{Continuous-time sequential recommendation with temporal graph collaborative transformer}. In \bibinfo{booktitle}{\emph{Proceedings of the 30th ACM international conference on information \& knowledge management}}. \bibinfo{pages}{433--442}.
\newblock


\bibitem[Hamilton et~al\mbox{.}(2017)]%
        {hamilton2017inductive}
\bibfield{author}{\bibinfo{person}{William~L. Hamilton}, \bibinfo{person}{Rex Ying}, {and} \bibinfo{person}{Jure Leskovec}.} \bibinfo{year}{2017}\natexlab{}.
\newblock \showarticletitle{Inductive Representation Learning on Large Graphs}. In \bibinfo{booktitle}{\emph{Neural Information Processing Systems}}.
\newblock


\bibitem[Hawkes(1971)]%
        {hawkes1971spectra}
\bibfield{author}{\bibinfo{person}{Alan~G Hawkes}.} \bibinfo{year}{1971}\natexlab{}.
\newblock \showarticletitle{Spectra of some self-exciting and mutually exciting point processes}.
\newblock \bibinfo{journal}{\emph{Biometrika}} \bibinfo{volume}{58}, \bibinfo{number}{1} (\bibinfo{year}{1971}), \bibinfo{pages}{83--90}.
\newblock


\bibitem[He et~al\mbox{.}(2016)]%
        {he2016deep}
\bibfield{author}{\bibinfo{person}{Kaiming He}, \bibinfo{person}{Xiangyu Zhang}, \bibinfo{person}{Shaoqing Ren}, {and} \bibinfo{person}{Jian Sun}.} \bibinfo{year}{2016}\natexlab{}.
\newblock \showarticletitle{Deep residual learning for image recognition}. In \bibinfo{booktitle}{\emph{Proceedings of the IEEE conference on computer vision and pattern recognition}}. \bibinfo{pages}{770--778}.
\newblock


\bibitem[Jin et~al\mbox{.}(2022)]%
        {jin2022neural}
\bibfield{author}{\bibinfo{person}{Ming Jin}, \bibinfo{person}{Yuan-Fang Li}, {and} \bibinfo{person}{Shirui Pan}.} \bibinfo{year}{2022}\natexlab{}.
\newblock \showarticletitle{Neural Temporal Walks: Motif-Aware Representation Learning on Continuous-Time Dynamic Graphs}. In \bibinfo{booktitle}{\emph{Thirty-Sixth Conference on Neural Information Processing Systems}}.
\newblock


\bibitem[Kreuzer et~al\mbox{.}(2021)]%
        {kreuzer2021rethinking}
\bibfield{author}{\bibinfo{person}{Devin Kreuzer}, \bibinfo{person}{Dominique Beaini}, \bibinfo{person}{Will Hamilton}, \bibinfo{person}{Vincent L{\'e}tourneau}, {and} \bibinfo{person}{Prudencio Tossou}.} \bibinfo{year}{2021}\natexlab{}.
\newblock \showarticletitle{Rethinking graph transformers with spectral attention}.
\newblock \bibinfo{journal}{\emph{Advances in Neural Information Processing Systems}}  \bibinfo{volume}{34} (\bibinfo{year}{2021}), \bibinfo{pages}{21618--21629}.
\newblock


\bibitem[Kumar et~al\mbox{.}(2019)]%
        {kumar2019predicting}
\bibfield{author}{\bibinfo{person}{Srijan Kumar}, \bibinfo{person}{Xikun Zhang}, {and} \bibinfo{person}{Jure Leskovec}.} \bibinfo{year}{2019}\natexlab{}.
\newblock \showarticletitle{Predicting dynamic embedding trajectory in temporal interaction networks}. In \bibinfo{booktitle}{\emph{Proceedings of the 25th ACM SIGKDD international conference on knowledge discovery \& data mining}}. \bibinfo{pages}{1269--1278}.
\newblock


\bibitem[Lavenberg(1983)]%
        {lavenberg1983computer}
\bibfield{author}{\bibinfo{person}{Stephen Lavenberg}.} \bibinfo{year}{1983}\natexlab{}.
\newblock \bibinfo{booktitle}{\emph{Computer performance modeling handbook}}.
\newblock \bibinfo{publisher}{Elsevier}.
\newblock


\bibitem[Li et~al\mbox{.}(2018)]%
        {li2018diffusion}
\bibfield{author}{\bibinfo{person}{Yaguang Li}, \bibinfo{person}{Rose Yu}, \bibinfo{person}{Cyrus Shahabi}, {and} \bibinfo{person}{Yan Liu}.} \bibinfo{year}{2018}\natexlab{}.
\newblock \showarticletitle{Diffusion Convolutional Recurrent Neural Network: Data-Driven Traffic Forecasting}. In \bibinfo{booktitle}{\emph{International Conference on Learning Representations}}.
\newblock


\bibitem[Liu et~al\mbox{.}(2014)]%
        {liu2014exploiting}
\bibfield{author}{\bibinfo{person}{Yong Liu}, \bibinfo{person}{Wei Wei}, \bibinfo{person}{Aixin Sun}, {and} \bibinfo{person}{Chunyan Miao}.} \bibinfo{year}{2014}\natexlab{}.
\newblock \showarticletitle{Exploiting geographical neighborhood characteristics for location recommendation}. In \bibinfo{booktitle}{\emph{Proceedings of the 23rd ACM international conference on conference on information and knowledge management}}. \bibinfo{pages}{739--748}.
\newblock


\bibitem[Luo and Li(2022)]%
        {luo2022neighborhood}
\bibfield{author}{\bibinfo{person}{Yuhong Luo} {and} \bibinfo{person}{Pan Li}.} \bibinfo{year}{2022}\natexlab{}.
\newblock \showarticletitle{Neighborhood-aware scalable temporal network representation learning}. In \bibinfo{booktitle}{\emph{Learning on Graphs Conference}}. PMLR, \bibinfo{pages}{1--1}.
\newblock


\bibitem[Nguyen et~al\mbox{.}(2018)]%
        {nguyen2018continuous}
\bibfield{author}{\bibinfo{person}{Giang~Hoang Nguyen}, \bibinfo{person}{John~Boaz Lee}, \bibinfo{person}{Ryan~A Rossi}, \bibinfo{person}{Nesreen~K Ahmed}, \bibinfo{person}{Eunyee Koh}, {and} \bibinfo{person}{Sungchul Kim}.} \bibinfo{year}{2018}\natexlab{}.
\newblock \showarticletitle{Continuous-time dynamic network embeddings}. In \bibinfo{booktitle}{\emph{Companion proceedings of the the web conference 2018}}. \bibinfo{pages}{969--976}.
\newblock


\bibitem[Pareja et~al\mbox{.}(2020)]%
        {pareja2020evolvegcn}
\bibfield{author}{\bibinfo{person}{Aldo Pareja}, \bibinfo{person}{Giacomo Domeniconi}, \bibinfo{person}{Jie Chen}, \bibinfo{person}{Tengfei Ma}, \bibinfo{person}{Toyotaro Suzumura}, \bibinfo{person}{Hiroki Kanezashi}, \bibinfo{person}{Tim Kaler}, \bibinfo{person}{Tao Schardl}, {and} \bibinfo{person}{Charles Leiserson}.} \bibinfo{year}{2020}\natexlab{}.
\newblock \showarticletitle{Evolvegcn: Evolving graph convolutional networks for dynamic graphs}. In \bibinfo{booktitle}{\emph{Proceedings of the AAAI Conference on Artificial Intelligence}}, Vol.~\bibinfo{volume}{34}. \bibinfo{pages}{5363--5370}.
\newblock


\bibitem[Park et~al\mbox{.}(2022)]%
        {park2022grpe}
\bibfield{author}{\bibinfo{person}{Wonpyo Park}, \bibinfo{person}{Woong-Gi Chang}, \bibinfo{person}{Donggeon Lee}, \bibinfo{person}{Juntae Kim}, {et~al\mbox{.}}} \bibinfo{year}{2022}\natexlab{}.
\newblock \showarticletitle{Grpe: Relative positional encoding for graph transformer}. In \bibinfo{booktitle}{\emph{ICLR2022 Machine Learning for Drug Discovery}}.
\newblock


\bibitem[Perozzi et~al\mbox{.}(2014)]%
        {perozzi2014deepwalk}
\bibfield{author}{\bibinfo{person}{Bryan Perozzi}, \bibinfo{person}{Rami Al-Rfou}, {and} \bibinfo{person}{Steven Skiena}.} \bibinfo{year}{2014}\natexlab{}.
\newblock \showarticletitle{Deepwalk: Online learning of social representations}. In \bibinfo{booktitle}{\emph{Proceedings of the 20th ACM SIGKDD international conference on Knowledge discovery and data mining}}. \bibinfo{pages}{701--710}.
\newblock


\bibitem[Rossi et~al\mbox{.}(2020)]%
        {rossi2020temporal}
\bibfield{author}{\bibinfo{person}{Emanuele Rossi}, \bibinfo{person}{Ben Chamberlain}, \bibinfo{person}{Fabrizio Frasca}, \bibinfo{person}{Davide Eynard}, \bibinfo{person}{Federico Monti}, {and} \bibinfo{person}{Michael Bronstein}.} \bibinfo{year}{2020}\natexlab{}.
\newblock \showarticletitle{Temporal graph networks for deep learning on dynamic graphs}.
\newblock \bibinfo{journal}{\emph{arXiv preprint arXiv:2006.10637}} (\bibinfo{year}{2020}).
\newblock


\bibitem[Sankar et~al\mbox{.}(2020)]%
        {sankar2020dysat}
\bibfield{author}{\bibinfo{person}{Aravind Sankar}, \bibinfo{person}{Yanhong Wu}, \bibinfo{person}{Liang Gou}, \bibinfo{person}{Wei Zhang}, {and} \bibinfo{person}{Hao Yang}.} \bibinfo{year}{2020}\natexlab{}.
\newblock \showarticletitle{DySAT: Deep neural representation learning on dynamic graphs via self-attention networks}. In \bibinfo{booktitle}{\emph{Proceedings of the 13th International Conference on Web Search and Data Mining}}. \bibinfo{pages}{519--527}.
\newblock


\bibitem[Tang et~al\mbox{.}(2015)]%
        {tang2015line}
\bibfield{author}{\bibinfo{person}{Jian Tang}, \bibinfo{person}{Meng Qu}, \bibinfo{person}{Mingzhe Wang}, \bibinfo{person}{Ming Zhang}, \bibinfo{person}{Jun Yan}, {and} \bibinfo{person}{Qiaozhu Mei}.} \bibinfo{year}{2015}\natexlab{}.
\newblock \showarticletitle{Line: Large-scale information network embedding}. In \bibinfo{booktitle}{\emph{Proceedings of the 24th international conference on world wide web}}. \bibinfo{pages}{1067--1077}.
\newblock


\bibitem[Trivedi et~al\mbox{.}(2019)]%
        {trivedi2019dyrep}
\bibfield{author}{\bibinfo{person}{Rakshit Trivedi}, \bibinfo{person}{Mehrdad Farajtabar}, \bibinfo{person}{Prasenjeet Biswal}, {and} \bibinfo{person}{Hongyuan Zha}.} \bibinfo{year}{2019}\natexlab{}.
\newblock \showarticletitle{Dyrep: Learning representations over dynamic graphs}. In \bibinfo{booktitle}{\emph{International conference on learning representations}}.
\newblock


\bibitem[Vaswani et~al\mbox{.}(2017)]%
        {vaswani2017attention}
\bibfield{author}{\bibinfo{person}{Ashish Vaswani}, \bibinfo{person}{Noam Shazeer}, \bibinfo{person}{Niki Parmar}, \bibinfo{person}{Jakob Uszkoreit}, \bibinfo{person}{Llion Jones}, \bibinfo{person}{Aidan~N Gomez}, \bibinfo{person}{{\L}ukasz Kaiser}, {and} \bibinfo{person}{Illia Polosukhin}.} \bibinfo{year}{2017}\natexlab{}.
\newblock \showarticletitle{Attention is all you need}.
\newblock \bibinfo{journal}{\emph{Advances in neural information processing systems}}  \bibinfo{volume}{30}.
\newblock


\bibitem[Veli{\v{c}}kovi{\'{c}} et~al\mbox{.}(2018)]%
        {velickovic2018graph}
\bibfield{author}{\bibinfo{person}{Petar Veli{\v{c}}kovi{\'{c}}}, \bibinfo{person}{Guillem Cucurull}, \bibinfo{person}{Arantxa Casanova}, \bibinfo{person}{Adriana Romero}, \bibinfo{person}{Pietro Li{\`{o}}}, {and} \bibinfo{person}{Yoshua Bengio}.} \bibinfo{year}{2018}\natexlab{}.
\newblock \showarticletitle{{Graph Attention Networks}}.
\newblock \bibinfo{journal}{\emph{International Conference on Learning Representations}} (\bibinfo{year}{2018}).
\newblock


\bibitem[Wang et~al\mbox{.}(2024a)]%
        {wang2024distributionally}
\bibfield{author}{\bibinfo{person}{Bohao Wang}, \bibinfo{person}{Jiawei Chen}, \bibinfo{person}{Changdong Li}, \bibinfo{person}{Sheng Zhou}, \bibinfo{person}{Qihao Shi}, \bibinfo{person}{Yang Gao}, \bibinfo{person}{Yan Feng}, \bibinfo{person}{Chun Chen}, {and} \bibinfo{person}{Can Wang}.} \bibinfo{year}{2024}\natexlab{a}.
\newblock \showarticletitle{Distributionally Robust Graph-based Recommendation System}. In \bibinfo{booktitle}{\emph{Proceedings of the ACM on Web Conference 2024}}. \bibinfo{pages}{3777--3788}.
\newblock


\bibitem[Wang et~al\mbox{.}(2024b)]%
        {wang2024online}
\bibfield{author}{\bibinfo{person}{Can Wang}, \bibinfo{person}{Zhe Wang}, \bibinfo{person}{Defang Chen}, \bibinfo{person}{Sheng Zhou}, \bibinfo{person}{Yan Feng}, {and} \bibinfo{person}{Chun Chen}.} \bibinfo{year}{2024}\natexlab{b}.
\newblock \showarticletitle{Online adversarial knowledge distillation for graph neural networks}.
\newblock \bibinfo{journal}{\emph{Expert Systems with Applications}}  \bibinfo{volume}{237} (\bibinfo{year}{2024}), \bibinfo{pages}{121671}.
\newblock


\bibitem[Wang et~al\mbox{.}(2021a)]%
        {wang2021tcl}
\bibfield{author}{\bibinfo{person}{Lu Wang}, \bibinfo{person}{Xiaofu Chang}, \bibinfo{person}{Shuang Li}, \bibinfo{person}{Yunfei Chu}, \bibinfo{person}{Hui Li}, \bibinfo{person}{Wei Zhang}, \bibinfo{person}{Xiaofeng He}, \bibinfo{person}{Le Song}, \bibinfo{person}{Jingren Zhou}, {and} \bibinfo{person}{Hongxia Yang}.} \bibinfo{year}{2021}\natexlab{a}.
\newblock \showarticletitle{Tcl: Transformer-based dynamic graph modelling via contrastive learning}.
\newblock \bibinfo{journal}{\emph{arXiv preprint arXiv:2105.07944}} (\bibinfo{year}{2021}).
\newblock


\bibitem[Wang et~al\mbox{.}(2022)]%
        {wang2022synchronous}
\bibfield{author}{\bibinfo{person}{Tian Wang}, \bibinfo{person}{Jiahui Chen}, \bibinfo{person}{Jinhu L{\"u}}, \bibinfo{person}{Kexin Liu}, \bibinfo{person}{Aichun Zhu}, \bibinfo{person}{Hichem Snoussi}, {and} \bibinfo{person}{Baochang Zhang}.} \bibinfo{year}{2022}\natexlab{}.
\newblock \showarticletitle{Synchronous spatiotemporal graph transformer: A new framework for traffic data prediction}.
\newblock \bibinfo{journal}{\emph{IEEE Transactions on Neural Networks and Learning Systems}} (\bibinfo{year}{2022}).
\newblock


\bibitem[Wang et~al\mbox{.}(2021c)]%
        {wang2021apan}
\bibfield{author}{\bibinfo{person}{Xuhong Wang}, \bibinfo{person}{Ding Lyu}, \bibinfo{person}{Mengjian Li}, \bibinfo{person}{Yang Xia}, \bibinfo{person}{Qi Yang}, \bibinfo{person}{Xinwen Wang}, \bibinfo{person}{Xinguang Wang}, \bibinfo{person}{Ping Cui}, \bibinfo{person}{Yupu Yang}, \bibinfo{person}{Bowen Sun}, {et~al\mbox{.}}} \bibinfo{year}{2021}\natexlab{c}.
\newblock \showarticletitle{APAN: Asynchronous propagation attention network for real-time temporal graph embedding}. In \bibinfo{booktitle}{\emph{Proceedings of the 2021 International Conference on Management of Data}}. \bibinfo{pages}{2628--2638}.
\newblock


\bibitem[Wang et~al\mbox{.}(2021b)]%
        {wangiclr2021caw}
\bibfield{author}{\bibinfo{person}{Yanbang Wang}, \bibinfo{person}{Yen{-}Yu Chang}, \bibinfo{person}{Yunyu Liu}, \bibinfo{person}{Jure Leskovec}, {and} \bibinfo{person}{Pan Li}.} \bibinfo{year}{2021}\natexlab{b}.
\newblock \showarticletitle{Inductive Representation Learning in Temporal Networks via Causal Anonymous Walks}. In \bibinfo{booktitle}{\emph{International Conference on Learning Representations}}.
\newblock


\bibitem[Wang et~al\mbox{.}(2024c)]%
        {wang2024towards}
\bibfield{author}{\bibinfo{person}{Zhe Wang}, \bibinfo{person}{Tianjian Zhao}, \bibinfo{person}{Zhen Zhang}, \bibinfo{person}{Jiawei Chen}, \bibinfo{person}{Sheng Zhou}, \bibinfo{person}{Yan Feng}, \bibinfo{person}{Chun Chen}, {and} \bibinfo{person}{Can Wang}.} \bibinfo{year}{2024}\natexlab{c}.
\newblock \showarticletitle{Towards Dynamic Graph Neural Networks with Provably High-Order Expressive Power}.
\newblock \bibinfo{journal}{\emph{arXiv preprint arXiv:2410.01367}} (\bibinfo{year}{2024}).
\newblock


\bibitem[Wen and Fang(2022)]%
        {wen2022trend}
\bibfield{author}{\bibinfo{person}{Zhihao Wen} {and} \bibinfo{person}{Yuan Fang}.} \bibinfo{year}{2022}\natexlab{}.
\newblock \showarticletitle{Trend: Temporal event and node dynamics for graph representation learning}. In \bibinfo{booktitle}{\emph{Proceedings of the ACM Web Conference 2022}}. \bibinfo{pages}{1159--1169}.
\newblock


\bibitem[Wu et~al\mbox{.}(2022)]%
        {wu2022graph}
\bibfield{author}{\bibinfo{person}{Jiancan Wu}, \bibinfo{person}{Xiangnan He}, \bibinfo{person}{Xiang Wang}, \bibinfo{person}{Qifan Wang}, \bibinfo{person}{Weijian Chen}, \bibinfo{person}{Jianxun Lian}, {and} \bibinfo{person}{Xing Xie}.} \bibinfo{year}{2022}\natexlab{}.
\newblock \showarticletitle{Graph convolution machine for context-aware recommender system}.
\newblock \bibinfo{journal}{\emph{Frontiers of Computer Science}} \bibinfo{volume}{16}, \bibinfo{number}{6} (\bibinfo{year}{2022}), \bibinfo{pages}{166614}.
\newblock


\bibitem[Xiong et~al\mbox{.}(2020)]%
        {xiong2020layer}
\bibfield{author}{\bibinfo{person}{Ruibin Xiong}, \bibinfo{person}{Yunchang Yang}, \bibinfo{person}{Di He}, \bibinfo{person}{Kai Zheng}, \bibinfo{person}{Shuxin Zheng}, \bibinfo{person}{Chen Xing}, \bibinfo{person}{Huishuai Zhang}, \bibinfo{person}{Yanyan Lan}, \bibinfo{person}{Liwei Wang}, {and} \bibinfo{person}{Tieyan Liu}.} \bibinfo{year}{2020}\natexlab{}.
\newblock \showarticletitle{On layer normalization in the transformer architecture}. In \bibinfo{booktitle}{\emph{International Conference on Machine Learning}}. PMLR, \bibinfo{pages}{10524--10533}.
\newblock


\bibitem[Xu et~al\mbox{.}(2020b)]%
        {xuiclr2020tgat}
\bibfield{author}{\bibinfo{person}{Da Xu}, \bibinfo{person}{Chuanwei Ruan}, \bibinfo{person}{Evren K{\"{o}}rpeoglu}, \bibinfo{person}{Sushant Kumar}, {and} \bibinfo{person}{Kannan Achan}.} \bibinfo{year}{2020}\natexlab{b}.
\newblock \showarticletitle{Inductive representation learning on temporal graphs}. In \bibinfo{booktitle}{\emph{International Conference on Learning Representations}}.
\newblock


\bibitem[Xu et~al\mbox{.}(2019)]%
        {xu2018how}
\bibfield{author}{\bibinfo{person}{Keyulu Xu}, \bibinfo{person}{Weihua Hu}, \bibinfo{person}{Jure Leskovec}, {and} \bibinfo{person}{Stefanie Jegelka}.} \bibinfo{year}{2019}\natexlab{}.
\newblock \showarticletitle{How Powerful are Graph Neural Networks?}. In \bibinfo{booktitle}{\emph{International Conference on Learning Representations}}.
\newblock


\bibitem[Xu et~al\mbox{.}(2020a)]%
        {xu2020spatial}
\bibfield{author}{\bibinfo{person}{Mingxing Xu}, \bibinfo{person}{Wenrui Dai}, \bibinfo{person}{Chunmiao Liu}, \bibinfo{person}{Xing Gao}, \bibinfo{person}{Weiyao Lin}, \bibinfo{person}{Guo-Jun Qi}, {and} \bibinfo{person}{Hongkai Xiong}.} \bibinfo{year}{2020}\natexlab{a}.
\newblock \showarticletitle{Spatial-temporal transformer networks for traffic flow forecasting}.
\newblock \bibinfo{journal}{\emph{arXiv preprint arXiv:2001.02908}} (\bibinfo{year}{2020}).
\newblock


\bibitem[Ying et~al\mbox{.}(2021)]%
        {ying2021graphormer}
\bibfield{author}{\bibinfo{person}{Chengxuan Ying}, \bibinfo{person}{Tianle Cai}, \bibinfo{person}{Shengjie Luo}, \bibinfo{person}{Shuxin Zheng}, \bibinfo{person}{Guolin Ke}, \bibinfo{person}{Di He}, \bibinfo{person}{Yanming Shen}, {and} \bibinfo{person}{Tie-Yan Liu}.} \bibinfo{year}{2021}\natexlab{}.
\newblock \showarticletitle{Do transformers really perform badly for graph representation?}. In \bibinfo{booktitle}{\emph{Advances in Neural Information Processing Systems}}. \bibinfo{pages}{28877--28888}.
\newblock


\bibitem[You et~al\mbox{.}(2022)]%
        {you2022roland}
\bibfield{author}{\bibinfo{person}{Jiaxuan You}, \bibinfo{person}{Tianyu Du}, {and} \bibinfo{person}{Jure Leskovec}.} \bibinfo{year}{2022}\natexlab{}.
\newblock \showarticletitle{ROLAND: graph learning framework for dynamic graphs}. In \bibinfo{booktitle}{\emph{Proceedings of the 28th ACM SIGKDD Conference on Knowledge Discovery and Data Mining}}. \bibinfo{pages}{2358--2366}.
\newblock


\bibitem[Yu et~al\mbox{.}(2018)]%
        {yu2018spatio}
\bibfield{author}{\bibinfo{person}{Bing Yu}, \bibinfo{person}{Haoteng Yin}, {and} \bibinfo{person}{Zhanxing Zhu}.} \bibinfo{year}{2018}\natexlab{}.
\newblock \showarticletitle{Spatio-temporal graph convolutional networks: a deep learning framework for traffic forecasting}. In \bibinfo{booktitle}{\emph{Proceedings of the 27th International Joint Conference on Artificial Intelligence}}. \bibinfo{pages}{3634--3640}.
\newblock


\bibitem[Yu et~al\mbox{.}(2020)]%
        {yu2020spatio}
\bibfield{author}{\bibinfo{person}{Cunjun Yu}, \bibinfo{person}{Xiao Ma}, \bibinfo{person}{Jiawei Ren}, \bibinfo{person}{Haiyu Zhao}, {and} \bibinfo{person}{Shuai Yi}.} \bibinfo{year}{2020}\natexlab{}.
\newblock \showarticletitle{Spatio-temporal graph transformer networks for pedestrian trajectory prediction}. In \bibinfo{booktitle}{\emph{European Conference on Computer Vision}}. Springer, \bibinfo{pages}{507--523}.
\newblock


\bibitem[Yu et~al\mbox{.}(2023)]%
        {yu2023towards}
\bibfield{author}{\bibinfo{person}{Le Yu}, \bibinfo{person}{Leilei Sun}, \bibinfo{person}{Bowen Du}, {and} \bibinfo{person}{Weifeng Lv}.} \bibinfo{year}{2023}\natexlab{}.
\newblock \showarticletitle{Towards better dynamic graph learning: New architecture and unified library}.
\newblock \bibinfo{journal}{\emph{Advances in Neural Information Processing Systems}}  \bibinfo{volume}{36} (\bibinfo{year}{2023}), \bibinfo{pages}{67686--67700}.
\newblock


\bibitem[Zhang et~al\mbox{.}(2023)]%
        {zhang2023dyted}
\bibfield{author}{\bibinfo{person}{Kaike Zhang}, \bibinfo{person}{Qi Cao}, \bibinfo{person}{Gaolin Fang}, \bibinfo{person}{Bingbing Xu}, \bibinfo{person}{Hongjian Zou}, \bibinfo{person}{Huawei Shen}, {and} \bibinfo{person}{Xueqi Cheng}.} \bibinfo{year}{2023}\natexlab{}.
\newblock \showarticletitle{DyTed: Disentangled Representation Learning for Discrete-time Dynamic Graph}. In \bibinfo{booktitle}{\emph{Proceedings of the 29th ACM SIGKDD Conference on Knowledge Discovery and Data Mining}}. \bibinfo{pages}{3309--3320}.
\newblock


\bibitem[Zhu et~al\mbox{.}(2023)]%
        {zhu2023wingnn}
\bibfield{author}{\bibinfo{person}{Yifan Zhu}, \bibinfo{person}{Fangpeng Cong}, \bibinfo{person}{Dan Zhang}, \bibinfo{person}{Wenwen Gong}, \bibinfo{person}{Qika Lin}, \bibinfo{person}{Wenzheng Feng}, \bibinfo{person}{Yuxiao Dong}, {and} \bibinfo{person}{Jie Tang}.} \bibinfo{year}{2023}\natexlab{}.
\newblock \showarticletitle{WinGNN: Dynamic Graph Neural Networks with Random Gradient Aggregation Window}. In \bibinfo{booktitle}{\emph{Proceedings of the 29th ACM SIGKDD Conference on Knowledge Discovery and Data Mining}}. \bibinfo{pages}{3650--3662}.
\newblock


\bibitem[Zuo et~al\mbox{.}(2018)]%
        {zuo2018embedding}
\bibfield{author}{\bibinfo{person}{Yuan Zuo}, \bibinfo{person}{Guannan Liu}, \bibinfo{person}{Hao Lin}, \bibinfo{person}{Jia Guo}, \bibinfo{person}{Xiaoqian Hu}, {and} \bibinfo{person}{Junjie Wu}.} \bibinfo{year}{2018}\natexlab{}.
\newblock \showarticletitle{Embedding temporal network via neighborhood formation}. In \bibinfo{booktitle}{\emph{Proceedings of the 24th ACM SIGKDD international conference on knowledge discovery \& data mining}}. \bibinfo{pages}{2857--2866}.
\newblock


\end{thebibliography}

\appendix

\newpage

\section{Overall Algorithm of \model}
\label{sec::appendix_algorithm}

\begin{algorithm}
\caption{CorDGT Training}
\label{alg::train}
\begin{algorithmic}[1]
\REQUIRE  Training data $\mathcal{E}=[(u_i,v_i,t_i,e_i)]_{i=1}^M$, node feature $\mathcal{X}=[x(i)]_{i=1}^N$, hop number $K$, sampling number of contextual nodes $\{n_i\}_{i=1}^K$, $\alpha$, $\beta$.            
\STATE Initialize sparse matrices $\mathcal{M}_n$ and $\mathcal{M}_t$.
\STATE $loss \leftarrow 0$
\FOR {$(u,v,e,t) \in \mathcal{E}$}
\STATE $r \leftarrow$ Sampled negative node;
\STATE Sample the contextual node sets $\mathcal{C}(u,t), \mathcal{C}(v,t), \mathcal{C}(r,t)$.
    \FOR{$(w,t') \in \mathcal{C}(u,t) \cup \mathcal{C}(v,t)$} 
    \STATE Read the interaction counts $cnt(u,w)$ and $cnt(v,w)$ from $\mathcal{M}_n$; Read the latest interaction timestamps $t_n(u,w)$ and $t_n(v,w)$ from $\mathcal{M}_t$. 
    \STATE Compute $\mathrm{STPE\_c}(w,(u,v),t_{pred})$ using Eq. (\ref{eq::stpe-c}).
    \STATE $h^{(0)}(w) \leftarrow \mathrm{STPE\_c}(w,(u,v),t_{pred}) || x(w)$.
    \ENDFOR
    \STATE $H^{(0)}_{pos}(u) \leftarrow \mathrm{stack}(\{h^{(0)}(w))|w \in \mathcal{C}(u,t)\})$, $H^{(0)}_{pos}(v) \leftarrow \mathrm{stack}(\{h^{(0)}(w))|w \in \mathcal{C}(v,t)\})$
    \STATE Forward $H^{(0)}_{pos}(u)$ and $H^{(0)}_{pos}(v)$ using Eq. (\ref{eq::dgt_layer}-\ref{eq::root_node_embedding}) to obtain $z_{pos}(u,t)$ and $z_{pos}(v,t)$.
    \STATE Forward $z_{pos}(u,t)$ and $z_{pos}(v,t)$ using Eq. (\ref{eq::merge_func}) to obtain score $S(u,v,t)$.
    \FOR{$(w,t') \in \mathcal{C}(u,t) \cup \mathcal{C}(r,t)$}
    \STATE Read the interaction counts $cnt(u,w)$ and $cnt(v,w)$ from $\mathcal{M}_n$; Read the latest interaction timestamps $t_n(u,w)$ and $t_n(v,w)$ from $\mathcal{M}_t$. 
    \STATE Compute $\mathrm{STPE\_c}(w,(u,r),t_{pred})$ using Eq. (\ref{eq::stpe-c}).
    \STATE $h^{(0)}(w) \leftarrow \mathrm{STPE\_c}(w,(u,r),t_{pred}) || x(w)$.
    \ENDFOR
    \STATE $H^{(0)}_{neg}(u) \leftarrow \mathrm{stack}(\{h^{(0)}(w))|w \in \mathcal{C}(u,t)\})$, $H^{(0)}_{neg}(r) \leftarrow \mathrm{stack}(\{h^{(0)}(w))|w \in \mathcal{C}(r,t)\})$
    \STATE Forward $H^{(0)}_{neg}(u)$ and $H^{(0)}_{neg}(r)$ using Eq. (\ref{eq::dgt_layer}-\ref{eq::root_node_embedding}) to obtain $z_{neg}(u,t)$ and $z_{neg}(r,t)$.
    \STATE Forward $z_{neg}(u,t)$ and $z_{neg}(r,t)$ using Eq. (\ref{eq::merge_func}) to obtain score $S(u,r,t)$.
    \STATE $loss \leftarrow loss + BCE(S(u,v,t), S(u,r,t))$
    \STATE Update $\mathcal{M}_n(u,v) \leftarrow \mathcal{M}_n(u,v)+1$, $\mathcal{M}_t(u,v) \leftarrow t$ 
\ENDFOR
\RETURN $loss$
\end{algorithmic}
\end{algorithm}

\section{More discussion on other Stochastic process}
The intensity is the number of happening times within a certain time interval of a counting process. Therefore, non-counting processes, such as Gaussian process and Wiener process, are not applicable for intensity estimation. Poisson process is the most well-known and commonly used counting process. It assumes the intensity is a constant, which provides great mathematical tractability and computation efficiency. Most of other counting processes are generalized from Poisson processes, such as non-homogeneous Poisson process, Hawkes process and Markovian arrival process, which includes more undetermined parameters for flexibility. Estimating the parameters of these process requires significantly higher computation budget than Poisson process.

\section{Complexity Comparison with existing models}
In Section 4, we analyze the computation complexity of \modell. In this section, we also analyze the computation complexity of two baseline methods, i.e., TGAT \cite{xuiclr2020tgat} and TGN \cite{rossi2020temporal}. For these models, given a mini-batch of interactions of size $B$, the historical neighbor sampling process costs $O(B\log(\bar{d}))$ , where $\bar{d}$ represents the average degree of nodes. The forward process, costs $O(CHD)$ where $C,H,D$ denote the number of sampled neighbors, attention heads, and hidden dimensions, respectively. Thus, the overall time complexity of TGAT and TGN is $(Blog(\bar{d})+CHD)$. The size of network parameters is denoted as $P$. In addition, TGN requires a memory and costs the spatial complexity of $O(ND')$ where $N$ and $D'$ denote the number of nodes and hidden dimension of the memory. We summarize the time and space complexity in Table \ref{table::computation_complexity}. 

\begin{table}
\caption{The time and space complexity of different models.}
\label{table::computation_complexity}
\centering
\begin{tabular}{ccc}
\toprule[1.0pt]
 & Space Complexity   &  Time Complexity \\
\midrule
TGAT &  $O(B\log(\bar{d})+CHD)$	    &   $O(P)$     \\
TGN &  $O(B\log(\bar{d})+CHD)$   &    $O(P+ND')$    \\
\modell &  $O(B(\log(\bar{d})+C)+C^2HD)$   &  $O(P+|\mathcal{E}|)$   \\
\bottomrule[1.0pt]
\end{tabular}
\end{table}

\section{Experimental Setting}
\label{sec::experiment_setting}

\subsection{Datasets}
\label{subsection:datasets}

\begin{table*}[htbp]
\centering
\caption{Dataset statistics. Average Interaction Intensity $\lambda = 2|\mathcal{E}|/|\mathcal{V}|\mathcal{T}$ \cite{wangiclr2021caw}, where $|\mathcal{E}|$ and $|\mathcal{V}|$ denote the number of interactions and nodes, respectively. $\mathcal{T}$ denotes the total duration (seconds). }
\label{table::dataset statistics}
\scalebox{0.9}{
\begin{tabular}{cccccccccc}
\toprule[1.0pt]
                                        & Reddit          & Wikipedia      & LastFM          & UCI                  & Enron    & Social Evolution & Flights & Outdoors & Food  \\
\midrule 
\# Nodes      & 10,984 & 9,227  & 1,980  & 1,899 &   184 & 74 & 13,169 & 223,777 & 673,858 \\
\# Links  & 672,447 & 157,474 & 1,293,103 & 59,835 & 125,235 & 2,099,519 & 1,927,145 & 1,192,397 & 2,708,688 \\
\# Nodes attributes      & 172      & 172     & 0          & 0            & 0   & 0  & 0 & 0 & 0  \\
\# Link attributes & 172 & 172 & 0 & 0 & 0 & 2 & 1 & 0 & 0 \\

$\lambda$ &      $4.57\times10^{-5}$           &     $1.27\times10^{-5}$       &    $5.04\times10^{-4}$             &       $3.79\times10^{-6}$      &       $1.20\times10^{-5}$  &  $2.73\times10^{-3}$    &  $1.41\times10^{-5}$ & $1.71\times10^{-7}$ & $2.58\times10^{-7}$ \\
Is bipartite?                           &     True            & True &            True     &      False                    & False   &  False  &  False  & True & True  \\
\bottomrule[1.0pt]
\end{tabular}
}
\end{table*}

Our experiments section includes seven public datasets: Reddit \footnote{\url{http://snap.stanford.edu/jodie/reddit.csv}}, Wikipedia\footnote{\url{http://snap.stanford.edu/jodie/wikipedia.csv}}, UCI\footnote{\url{http://konect.cc/networks/opsahl-ucsocial/}}, LastFM\footnote{\url{http://snap.stanford.edu/jodie/lastfm.csv}}, Enron\footnote{\url{https://www.cs.cmu.edu/~enron/}}, Social Evolution\footnote{\url{http://realitycommons.media.mit.edu/socialevolution.html}} and Flights\footnote{\url{https://zenodo.org/records/3974209\#.Yf62HepKguU}}. Reddit network is an user action datasets which consists of subreddits posted by different users in one month on Reddit website. It is a bipartite dataset consisting of 10000 most active users and 984 subreddits with rich interaction feature provided. Wikipedia network records the clicking actions on wikipedia pages by different users. It is a bipartite network consisted by clicking actions on 1000 pages in one month made by users with rich interaction feature provided. UCI network is non-bipartite network which contains sent messages between the users of an online community of students from the University of California, Irvine. The nodes represent students and the edges represent the communicated messages among them. Enron is a non-bipartite dataset which consists of approximately 0.5M emails that were exchanged between employees of the Enron energy company over a span of three years. Social Evolution is a mobile phone proximity network which tracks the everyday life of a whole undergraduate dormitory from October 2008 to May 2009. Flights is a directed dynamic flight network illustrating the development of the air traffic during the COVID-19 pandemic, which was extracted and cleaned for the purpose of this study. Each node represents an airport and each edge is a tracked flight. The edge weights specify the number of flights between two given airports in a day. In addition, we select the large-scale Gowalla \cite{liu2014exploiting} dataset for scalability evaluation. Gowalla is a social network for users check-ins at various locations, containing about 36 million check-ins made by 0.32 million users over 2.8 million locations. These check-in records are in the time span of Jan 2009 - June 2011.
The locations are grouped into 7 main fields. We select a subset of Outdoors and Food field for experiments. Specifically, we choose the part of the Outdoors data from Jan. 2009 to Dec. 2010 and the part of the Food data from Jan. 2011 to June 2011. 
Detailed statistics of aforementioned datasets are presented in Table \ref{table::dataset statistics}.

\subsection{Baselines}
\label{sec::baseline_intro}
The brief introduction of baseline methods in the Experiments section are as follows: 

\paragraph{\textbf{CTDNE \cite{nguyen2018continuous}}} CTDNE extends DeepWalk \cite{perozzi2014deepwalk} to dynamic graphs which leverages a SkipGram model on the temporal random walk sequence and learn the node embeddings. 
\paragraph{\textbf{JODIE \cite{kumar2019predicting}}} Jodie updates node embeddings in an interaction via two coupled RNNs, which are leveraged for future link prediction via a temporal projector.
\paragraph{\textbf{DyRep \cite{trivedi2019dyrep}}} DyRep updates the node embeddings involved in an interaction by a recurrent model considering the messages from 2-hop temporal neighbors.
\paragraph{\textbf{TGAT \cite{xuiclr2020tgat}}} TGAT extends GAT \cite{velickovic2018graph} and GraphSAGE \cite{hamilton2017inductive} to dynamic graphs, which samples and recursively aggregates the messages of k-hop temporal neighbors. The temporal representation is obtained by Fourier transformation on the time interval.  
\paragraph{\textbf{TGN \cite{rossi2020temporal}}} TGN proposes a generalized message-passing networks by extending Jodie and TGAT with a per-node memory mechanism for long-time interactions. 
\paragraph{\textbf{TGSRec \cite{fan2021continuous}}} TGSRec proposes a Temporal Collaborative Transformer which simultaneously captures the collaborative signals from users and items as well as temporal dynamics. 
\paragraph{\textbf{TCL \cite{wang2021tcl}}} proposes a dynamic-graph-topology-aware Transformer with a two-stream encoder for semantic inter-dependency modeling. Contrastive learning is adopted to maximize mutual information between future interaction nodes. 
\paragraph{\textbf{CAWN \cite{wangiclr2021caw}}} CAWN samples temporal random walks and \\ anonymize the node identities via Causal Anonymous Walks (CAW). The node encodings on temporal random walks are learned via a sequential model. 
\paragraph{\textbf{Graphmixer \cite{cong2023we}}} Graphmixer proposes a conceptually simple architecture that leverage MLP and mean-pooling to aggregate the temporal information and node features of K most recently interacted neighbors. Note that the original Graphmixer leverages one-hot node encoding as input, thus can not be applied for inductive experiment. In this work, we replace the one-hot encodings as the node encodings used by other baselines for fair comparison.

\subsection{Hyper-parameters Tuning}
\label{sec::hyper-parameter}
For all the baselines, we set the dimension of time encoding and hidden unit as 100 and 172, respectively. For Jodie, DyRep and TGN, we adopt the implementation \footnote{\url{https://github.com/twitter-research/tgn}} for evaluation. In specific, we set the memory dimension as 32 (for UCI) or 172 (for Reddit, Wikipedia and LastFM). We adopt a one-layer model with 10 temporal neighbors being sampled.  The official implementations of TGAT  \footnote{\url{https://github.com/StatsDLMathsRecomSys/Inductive-representation-learning-on-temporal-graphs}} and TGSRec \footnote{\url{https://github.com/DyGRec/TGSRec/}} are adopted. The number of layers, attention heads and the sampled temporal neighbors are set as 2, 2 and 20, respective.

For CAWN, we adopt its official implementation \footnote{\url{https://github.com/snap-stanford/CAW}}. We grid search following hyper-parameters: the time scaling factor is set as $\{10^{-4},10^{-5},10^{-6},10^{-7}\}$, the random walk length in $\{2,3,4,5\}$ and the number of walks in $\{16,32,64,128\}$. 

For Graphmixer, we adopt the original implementation \footnote{\url{https://github.com/CongWeilin/GraphMixer}}. The time gap is set as 2000. The number of MLP-Mixer layers is set as 2. For TCL, we adopt the implementation of DyGLib \footnote{\url{https://github.com/yule-BUAA/DyGLib}}. The number of layers, attention heads and the sampled temporal neighbors are set as 2, 2 and 20, respective.

\begin{table*}
\centering
\caption{Area Under ROC curve (AUC) results of transductive/inductive link prediction. The values are multiplied by 100. The results of the best and second best performing models are highlighted in \textbf{bold} and \underline{underlined}, respectively.}
\label{table::link_prediction_auc}
\begin{tabular*}{\textwidth}{@{\extracolsep{\fill}}ccccccccc}
\toprule[1.0pt]
                               & Model     & Reddit        & Wikipedia            & LastFM        & UCI       & Enron & Social Evo. & Flights            \\
\midrule
\multirow{8}{*}{\rotatebox{90}{Transductive}}                 
& JODIE     & 97.51$\pm$0.2  & 94.67$\pm$0.1  & 70.43$\pm$0.6  &  88.67$\pm$0.2 &     79.65$\pm$2.6 &  91.74$\pm$0.5 & 96.17$\pm$1.2 \\
& DyRep     &  98.03$\pm$0.1 &  94.05$\pm$0.4  & 68.77$\pm$2.9  &  55.97$\pm$1.7  &   73.21$\pm$2.2 & 91.02$\pm$0.2  &  96.84$\pm$0.5  \\
& TGAT      & 98.04$\pm$0.1  &  94.42$\pm$0.1  & 51.81$\pm$1.2  & 79.73$\pm$0.1  &    59.25$\pm$0.2 & 94.37$\pm$0.2  & 94.24$\pm$0.2   \\
& TGN-attn  &  \underline{98.67$\pm$0.0} & 98.42$\pm$0.1  &  72.48$\pm$2.8 &  86.35$\pm$3.1   &    79.54$\pm$2.6 & \underline{94.93$\pm$0.2}  & 98.06$\pm$0.1   \\
& CAWN-attn & 98.63$\pm$0.0  &  \underline{98.63$\pm$0.1} & \underline{82.36$\pm$0.5}  &  89.86$\pm$0.3   &      \underline{90.28$\pm$0.2} & 88.21$\pm$0.1  &  \underline{98.30$\pm$0.0} \\
& TGSRec    & 88.96$\pm$1.5  &  85.28$\pm$0.7  &  66.67$\pm$3.7  & 71.37$\pm$0.6   &   71.85$\pm$2.1 & 77.47$\pm$0.0  &  95.04$\pm$0.3  \\
& TCL & 97.67$\pm$0.0 & 96.07$\pm$0.2 & 65.23$\pm$1.2 & 87.85$\pm$1.4 &   76.47$\pm$0.8 & 93.87$\pm$0.2 &  91.37$\pm$0.5 \\
& Graphmixer    & 97.53$\pm$0.2 & 97.14$\pm$0.1  & 75.28$\pm$0.1  &  \underline{90.63$\pm$0.2}  &     82.76$\pm$0.4 & 94.61$\pm$0.1  & 91.32$\pm$0.0  \\
\cmidrule{2-9}
& \modell   & \textbf{99.18$\pm$0.1}   &  \textbf{98.84$\pm$0.1}  & \textbf{92.37$\pm$0.2}   & \textbf{95.12$\pm$0.4} &     \textbf{93.28$\pm$0.4} & \textbf{95.48$\pm$0.1} & \textbf{98.84$\pm$0.0}  \\
\midrule
 & Improve &  0.51 &  0.21  &  10.01  & 4.84  &     3.00 & 0.55 & 0.54 \\
\midrule
\multirow{8}{*}{\rotatebox{90}{Inductive}}             
& JODIE  & 95.02$\pm$0.2  & 92.65$\pm$0.3  & 82.18$\pm$1.2  & 74.74$\pm$1.5   &     75.65$\pm$2.6 & 93.78$\pm$0.3 & 95.36$\pm$0.3  \\
& DyRep  & 95.81$\pm$0.3  &  91.15$\pm$0.7 & 79.81$\pm$2.0  & 48.63$\pm$1.3  &     56.93$\pm$2.0 & 91.43$\pm$0.5 & 93.64$\pm$0.9 \\
& TGAT   & 97.06$\pm$0.7  & 93.08$\pm$0.1  & 53.50$\pm$2.4  & 70.89$\pm$0.4  &      58.18$\pm$3.0 & 93.26$\pm$0.6 & 89.35$\pm$0.4\\
& TGN-attn   &  \underline{97.45$\pm$0.1}  &  \underline{97.74$\pm$0.1} & 78.61$\pm$2.9  & 82.08$\pm$2.7  &     72.15$\pm$2.3 & 93.54$\pm$0.6 & 96.03$\pm$0.4  \\
& CAWN-attn  & 97.23$\pm$0.4 &  96.43$\pm$0.4 & \underline{88.67$\pm$0.6} &  88.91$\pm$0.8  &     \underline{87.24$\pm$0.5} & 84.57$\pm$0.3  & \underline{96.64$\pm$0.0} \\
& TGSRec &  81.99$\pm$4.2 & 79.30$\pm$0.7  &  67.93$\pm$3.0  &  61.66$\pm$1.5  &     72.32$\pm$2.2 & 63.57$\pm$1.5 & 90.31$\pm$0.1 \\
& TCL &  93.84$\pm$0.1 & 95.36$\pm$0.2 & 71.32$\pm$1.0 & \underline{92.14$\pm$0.1} &   73.28$\pm$1.1 & 93.85$\pm$0.2 & 84.54$\pm$2.0 \\
& Graphmixer     & 94.65$\pm$0.0 & 97.38$\pm$0.0  &  81.65$\pm$0.1  & 85.79$\pm$0.1  &      76.38$\pm$0.7 & \underline{94.13$\pm$0.1} & 81.98$\pm$1.7  \\
\cmidrule{2-9}
& \modell    & \textbf{98.67$\pm$0.8}  & \textbf{98.26$\pm$0.3}  & \textbf{93.01$\pm$1.8}  &  \textbf{93.20$\pm$0.9}  &    \textbf{92.64$\pm$1.9} & \textbf{95.69$\pm$1.0} & \textbf{98.12$\pm$0.4} \\
\midrule
 & Improve &  1.24 & 0.52 & 4.34 & 1.06  &    5.40 & 1.56 & 1.48 \\
\bottomrule[1.0pt]
\end{tabular*}
\end{table*}

\subsection{Implementation details of \modell}
For fair comparison, we train all the models for 50 epochs with early stopping executed if there is no improvement on validation AP for 3 epochs. In addition, we use the batch size of 100 for all models. We repeat the methods for 3 runs and report the mean and standard deviation of statistics. For the proposed \model, we set the learning rate as 0.001 and optimizer as Adam for all datasets. The attention heads, layer number and hidden dimension of main \modell encoder is set as 6, 2, 64, respectively, for all datasets. We sample 2-hop temporal neighbors for all datasets. For LastFM and UCI, the sampling numbers of contextual neighbors are \{32,1\}, and for other datasets, the sampling numbers of contextual neighbors are \{20,1\}. We set $\alpha=0.1$ and $\beta=1.0$ for UCI and $\alpha=1$ and $\beta=0.1$ for LastFM. We set $\alpha=1$ and $\beta=10$ for other datasets. We set the dimension of Correlated Spatial-Temporal Positional Encoding as 200, where the dimension of Spatial Distance encoding and Temporal distance is set as 100 and 100, respectively. All the experiments are run on a Linux Ubuntu 18.04 Server with a NVIDIA RTX2080Ti GPU.

\section{Additional Experimental results}

\subsection{Node classification results}
\label{sec::node_cls}

\begin{table}
\caption{The AUC results of node classification. The values are multiplied by 100. The baseline results are taken from \cite{rossi2020temporal}. The best results are highlighted in \textbf{bold}. }
\label{table::node_classification}
\centering
\begin{tabular}{ccc}
\toprule[1.0pt]
Method & Wikipedia     & Reddit \\
\midrule
CTDNE  & 75.89$\pm$0.5 &  59.43$\pm$0.6      \\
JODIE  & 84.84$\pm$1.2 &  61.83$\pm$2.7      \\
DyRep  & 84.59$\pm$2.2 &  62.91$\pm$2.4      \\
TGAT   & 83.69$\pm$0.7 &  65.56$\pm$0.7      \\
TGN    & 87.81$\pm$0.3 &  67.06$\pm$0.9      \\
\midrule
\modell    & \textbf{88.16$\pm$0.3} &  \textbf{70.33$\pm$0.6}     \\
\bottomrule[1.0pt]
\end{tabular}
\end{table}

The experiments settings of node classification are as follows. We initially train a model using the transductive link prediction task. Subsequently, we load the trained model and freeze its parameters, then append a classifier on top of it for the purpose of classification. The AUC results of dynamic node classification are presented in Table \ref{table::node_classification}. Note that we replace STPE-C with STPE-U as the positional encodings, since the node classification task only considers single node rather than interaction. Our \modell achieves the best performance on both Wikipedia and Reddit datasets compared with other baselines. 

\subsection{AUC results of link prediction}

The AUC results of link prediction are presented in Table \ref{table::link_prediction_auc}. As can be seen, our proposed \modell consistently outperforms other baselines on both transductive and inductive experiments.

\subsection{Scalability on Gowalla Outdoors}
\label{sec::gowalla_outdoor_time}
\begin{figure}
    \centering
    \includegraphics[width=0.8\linewidth]{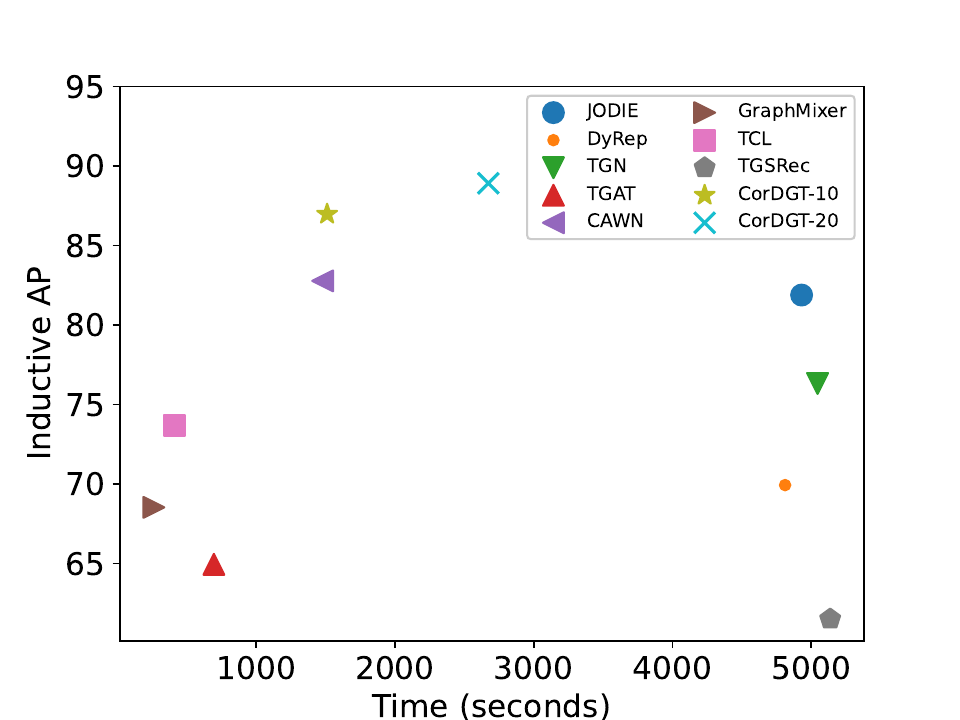}
    \caption{The inductive AP metrics and the training time per epoch on Outdoors datasets. The closer to the upper left corner, the better performance. CorDGT-10 and CorDGT-20 denote the \modell with 10 and 20 contextual nodes, respectively.}
    \label{fig::gowalla-outdoors-time}
\end{figure}

We further evaluate the scalability of the proposed \modell on Gowalla-Food datasets. We train all the models one epoch. The inductive AP and training time per epoch results are presented in Figure \ref{fig::gowalla-outdoors-time}. We observe that \modell with 10 or 20 contextual nodes can obtain the highest inductive AP among baselines. In addition, the training speed of \modell with 10 contextual nodes (1511 seconds per epoch) is significantly faster than TGN (5046 seconds per epoch) on Food datasets. This result demonstrate the scalability of the proposed \modell. 

\subsection{Parameters Sensitivity}

\begin{figure}
  \centering
  \begin{minipage}[b]{0.45\linewidth}
    \centering
    \subfigure[]{\includegraphics[width=\linewidth]{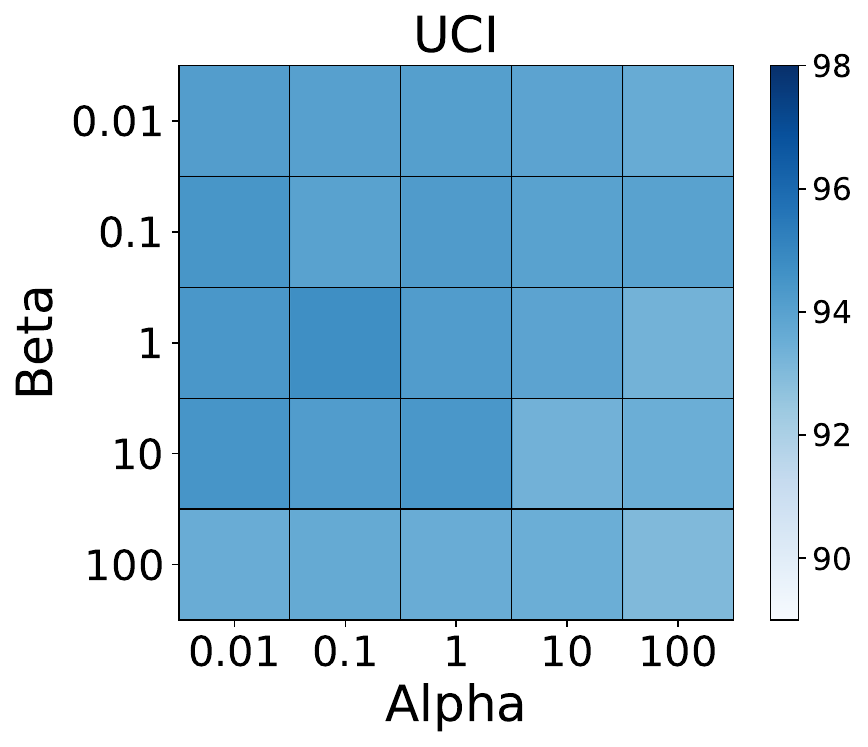}\label{fig:uci_alpha_beta}}
  \end{minipage}
  \begin{minipage}[b]{0.45\linewidth}
    \centering
    \subfigure[]{\includegraphics[width=\linewidth]{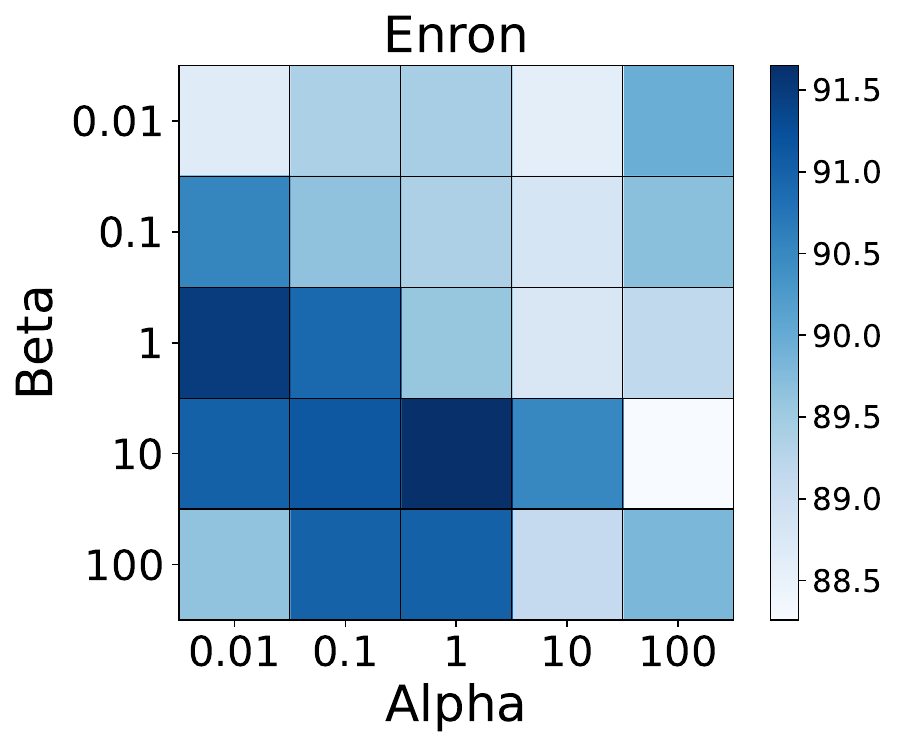}\label{fig:enron_alpha_beta}}
  \end{minipage}

  \vspace{0.5em}

  \begin{minipage}[b]{0.45\linewidth}
    \centering
    \subfigure[]{\includegraphics[width=\linewidth]{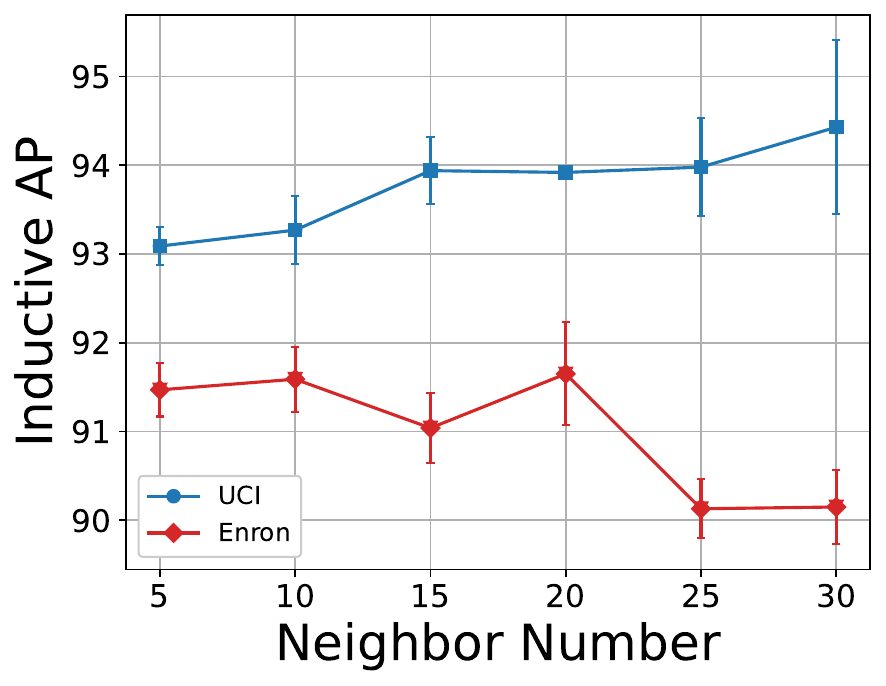}\label{fig:neigh_num}}
  \end{minipage}
  \begin{minipage}[b]{0.45\linewidth}
    \centering
    \subfigure[]{\includegraphics[width=\linewidth]{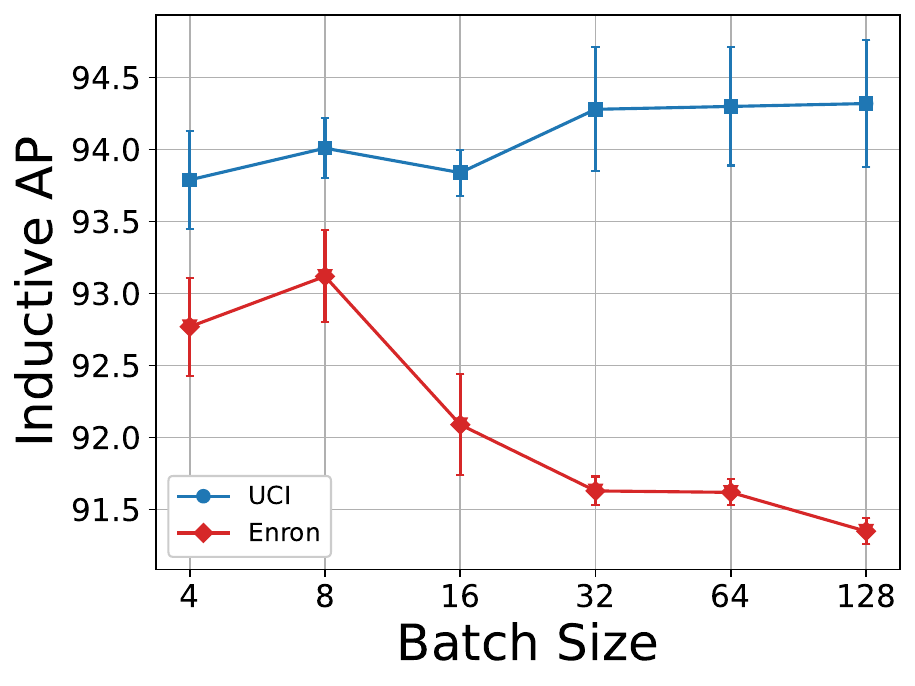}\label{fig:batch size}}
  \end{minipage}

  \caption{Parameters sensitivity analysis. The inductive AP values are described in all figures. The performance of the model changes with (a) different $\alpha$ and $\beta$ on UCI dataset, (b) different $\alpha$ and $\beta$ on Enron dataset, (c) different 1-hop neighbor number, (d) different batch size.   }
  \label{fig:sensitivity}
\end{figure}

In this subsection, we evaluate the sensitivity of the proposed \modell with respect to some key hyperparameters. The results are illustrated in Figure \ref{fig:sensitivity}, and the observations are as follows: 
(1) Equation (\ref{equation::TD2}) employs coefficients 
$\alpha$ and $\beta$ to balance recentness and intensity. We evaluate the influence of $\alpha$ and $\beta$ on UCI and Wikipedia. As shown in Figure \ref{fig:sensitivity}(a), for UCI dataset, varying settings of $\alpha$ and $\beta$ minimally influence the performance of \model. On the other hand, for Enron dataset (Figure \ref{fig:sensitivity}(b)), the smaller $\alpha$ ($\leq 1.0$) and larger $\beta$ ($\geq 1.0$) enhance the performance of \model. 
(2) We also investigate the influence of sampled 1-hop contextual node numbers on UCI and Enron datasets, as depicted in Figure \ref{fig:sensitivity}(c). On the UCI dataset, the model's performance improves with an increasing number of sampled contextual neighbors. In contrast, on the Enron dataset, the model achieves optimal performance with 20 sampled neighbors, possibly due to Enron's smaller node number.
(3) Furthermore, we explore the impact of different batch sizes on UCI and Enron datasets, as shown in Figure \ref{fig:sensitivity}(d). The model's performance remains stable on UCI. However, on the Enron dataset, performance generally deteriorates with increasing sampled neighbor numbers. This discrepancy may stem from Enron's higher average interaction intensity, suggesting that more frequent updates on recent interactions and counts could benefit proximity evaluation.

\end{document}